\documentclass[11pt,twoside]{article}

\usepackage[top=1in, bottom=1in, left=1in, right=1in]{geometry}
\usepackage{amsmath, amsthm, amssymb, graphicx, url, bm}
\usepackage{amsfonts}
\usepackage{dsfont}

\RequirePackage{placeins}
\RequirePackage{amsmath}
\RequirePackage{amssymb}
\usepackage[round]{natbib}
\RequirePackage{graphicx}
\RequirePackage{url}
\PassOptionsToPackage{x11names}{xcolor}
\RequirePackage{xcolor}
\PassOptionsToPackage{algo2e,ruled}{algorithm2e}
\RequirePackage{algorithm2e}
\usepackage[utf8]{inputenc}
\usepackage{amsfonts}
\usepackage[normalem]{ulem}
\usepackage{xspace}
\usepackage{hyperref, prettyref}
\usepackage{xcolor}
\usepackage{enumitem}
\usepackage[at]{easylist}

\usepackage[T1]{fontenc}    

\usepackage{url}            
\usepackage{booktabs}       

\usepackage{nicefrac}       
\usepackage{microtype}      

\usepackage{algorithm, algorithmic}

\usepackage{mathtools}

\usepackage{thmtools, thm-restate}
\usepackage{enumitem}
\usepackage{subcaption}

\usepackage{graphicx,wrapfig}
\usepackage{natbib}

\usepackage[capitalize,noabbrev]{cleveref}
\RequirePackage{hyperref}
\RequirePackage{nameref}
\hypersetup{colorlinks,
            linkcolor=blue,
            citecolor=blue,
            urlcolor=magenta,
            linktocpage,
            plainpages=false}

\newcommand{\mathify}[1]{\ensuremath{#1}\xspace}
\newcommand{\dd}{\mathrm{d}}

\newcommand{\R}{\mathify{\mathbb{R}}} 
\newcommand{\eps}{\mathify{\epsilon}}

\usepackage{amsfonts}
\usepackage{dsfont}
\usepackage{color}
\usepackage{thmtools}

\newcommand{\cS}{\mathcal{S}}

\newcommand{\cN}{\mathcal{N}}

\newcommand{\DERIV}{\mathrm{d}}

\newcommand{\full}[1]{\ifnum \FULL=1{#1}\fi}
\newcommand{\short}[1]{\ifnum \FULL=0{#1}\fi}
\DeclareMathOperator*{\E}{\mathbb{E}}

\DeclareMathOperator*{\PP}{\mathbb{P}}

\newcommand{\cL}{\mathcal{L}}

\newcommand{\KL}{\mathrm{KL}}

\newcommand{\poly}{\mathrm{poly}}

\providecommand{\Cone}{\mathrm{Cone}}

\newcommand{\ones}{\mathbbm{1}}

\newcommand{\grad}{\nabla}
\newcommand{\pitarget}{\pi_{\rm tar}}

\newcommand{\TV}{\mathrm{TV}}

\newtheorem{theorem}{Theorem}
\newtheorem{lemma}[theorem]{Lemma} 
\newtheorem{assumption}[theorem]{Assumption} 
 
\newtheorem{remark}[theorem]{Remark}
\newtheorem{corollary}[theorem]{Corollary}
\newtheorem{definition}[theorem]{Definition}

\makeatletter
\let\@fnsymbol\@arabic
\makeatother
\newcommand\blfootnote[1]{%
  \begingroup
  \renewcommand\thefootnote{}\footnote{#1}%
  \addtocounter{footnote}{-1}%
  \endgroup
}

\newcommand{\mathbbm}[1]{\text{\usefont{U}{bbm}{m}{n}#1}}
\usepackage{times}

\begin{document}
\title{\bf{\LARGE{On the Robustness of Langevin Dynamics to Score Function Error}}}
\author{Daniel Yiming Cao\footnotemark[1] \and August Y. Chen\footnotemark[1] \and Karthik Sridharan\footnotemark[1] \and Yuchen Wu\footnotemark[1]}

\date{}
\maketitle
\blfootnote{Alphabetical ordering. Emails: \{dyc33@cornell.edu, ayc74@cornell.edu, ks999@cornell.edu, yuchen.wu@cornell.edu\} }
\footnotetext[1]{Cornell University, Ithaca, USA}

\begin{abstract}
We consider the robustness of score-based generative modeling to errors in the estimate of the score function. In particular, we show that Langevin dynamics is not robust to the $L^2$ errors (more generally $L^p$ errors) in the estimate of the score function. 
It is well-established that with small $L^2$ errors in the estimate of the score function, diffusion models can sample faithfully from the target distribution under fairly mild regularity assumptions in a polynomial time horizon. 
In contrast, our work shows that even for simple distributions in high dimensions, Langevin dynamics run for any polynomial time horizon will produce a distribution far from the target distribution in Total Variation (TV) distance, even when the $L^2$ error (more generally $L^p$) of the estimate of the score function is arbitrarily small. 
Considering such an error in the estimate of the score function is unavoidable in practice when learning the score function from data, our results provide further justification for diffusion models over Langevin dynamics and serve to caution against the use of Langevin dynamics with estimated scores.
\end{abstract}

\section{Introduction}\label{sec:intro}
Many sampling algorithms ubiquitous in statistics and Machine Learning (ML) -- used from Bayesian inference to modern generative modeling -- are score-based sampling algorithms. These sampling algorithms are widely used in a range of scientific and engineering applications, such as image generation (see e.g. \citet{croitoru2023diffusion}), inverse problems in applied mathematics (e.g. \citet{sanz2023inverse}), physical sciences (e.g. \citet{zheng2025inversebench}), protein design and computational biology (e.g. \citet{guo2024diffusion}), and medical image reconstruction (e.g. \citet{chung2022come}). 
Many such score-based or related sampling algorithms exist. An incomplete list includes Langevin dynamics (see e.g. \citet{welling2011bayesian, durmus2018efficient, chewi2024log}), diffusion models (e.g. \citet{ho2020denoising,song2019generative,song2020denoising,song2020improved, song2020score} for some foundational early works), flow matching (e.g. \citet{lipman2022flow}), and stochastic interpolants (e.g. \citet{albergo2023stochastic}). 

Among these sampling algorithms, we investigate Langevin dynamics and diffusion models. Langevin dynamics is a classical technique for sampling from target distributions in statistics and ML, while diffusion models are a modern, popular and effective method for generative AI.
At a high level, these algorithms approximately sample from a target distribution $\pitarget$ in $\R^d$ by running a stochastic process driven by the \textit{score function} $\nabla \log \pitarget$ in the case of Langevin dynamics, or a suitable sequence of score functions $\nabla \log \pi_0, \nabla \log \pi_1, \ldots, \nabla \log\pi_k$ (the \textit{annealed score functions}) in the case of diffusion models, where $\pi_0 \approx \pitarget$ and $\pi_0, \ldots, \pi_k$ forms a gradually noised version of the target distribution $\pitarget$. The continuous-time idealization of both these processes converges to $\pitarget$ under mild assumptions on $\pitarget$, see Section \ref{sec:background}.

\textbf{Our focus:} In generative modeling applications, the score functions $\nabla \log \pitarget$ or $\nabla \log \pi_0$, $\nabla \log \pi_1$, $\ldots$, $\nabla \log\pi_k$ are not explicitly known and must be estimated from data (from $\pitarget$). This is done in practice by \textit{score matching}: training a suitable parametric model (e.g., neural network) to learn these functions from data, by minimizing a suitable $L^2$ or similar loss.
See, e.g., \citet{ho2020denoising,song2019generative,song2020denoising,song2020improved, song2020score, karras2022elucidating, chen2022sampling} for diffusion models, and  \citet{koehler2022statistical, pabbaraju2023provable, koehler2023sampling, koehler2024efficiently} for Langevin dynamics. This yields estimates of the score function(s) that are accurate in, e.g., $L^2$. It is a priori unclear whether running Langevin dynamics or diffusion models with these estimates -- as done in practice -- leads to faithful samples from the target $\pitarget$ as happens with the respective idealized continuous-time processes. This motivates the fundamental question: 
\begin{center}
\textbf{Main Question:} \emph{Is small $L^2$ (or more generally $L^p)$ score estimation error with respect to $\pitarget$ sufficient for the success of score-based sampling algorithms? }
\end{center}
This question has been answered affirmatively in the context of diffusion models (see e.g. \citet{chen2023probability, bentonnearly, li2024accelerating, liang2025low} among many more works) for general target distributions $\pitarget$ with bounded second moment. Here we emphasize that this result for diffusion models assumes that a suitable weighted average of the $L^2$ score estimation errors of \textit{all of} $\grad \log \pi_0, \grad \log \pi_1, \ldots, \grad \log \pi_k$ (with respect to $\pi_0, \pi_1, \ldots, \pi_k$ respectively) is small, rather than just small $L^2$ estimation error of $\grad \log \pitarget$.
In particular, let $\widehat \pitarget$ denote the output distribution. With an estimated score $s$ and appropriate discretization, it is known that within a number of iterations \textbf{polynomial} in $d$, the sampling error scales linearly with the $L^2$ score estimation error: 
\begin{align*}
\TV(\pitarget, \widehat{\pitarget}) \lesssim \varepsilon_{\rm score},
\end{align*}
where $\TV$ is Total Variation (TV) distance and $\varepsilon_{\rm score}^2$ is a suitable weighted average of the $L^2$ score estimation errors of $\grad \log \pi_0, \grad \log \pi_1, \ldots, \grad \log \pi_k$ with respect to $\pi_0, \pi_1, \ldots, \pi_k$ respectively.
Such a result justifies the empirical success of diffusion models run with estimated score functions, such as those learned via score matching.

In sharp contrast, \textit{the Main Question has not been answered for Langevin dynamics}. 
For an overview of related work:
\begin{itemize}
    \item Several works such as \citet{das2023utilising, huang2024faster} study the robustness of Langevin dynamics to $L^{\infty}$ bounds on the score estimate $\hat s$. However, $L^2$ and more generally $L^p$ bounds on score estimation error w.r.t. $\pitarget$ are more realistic than $L^{\infty}$ bounds when learning $\hat s$ from data with score matching \citep{chen2023probability, koehler2023sampling, koehler2024efficiently}.\footnote{In this paper, $L^2$ and more generally $L^p$ norms are implicitly taken with respect to $\pitarget$ unless stated otherwise.} 
    We note \citet{lee2022convergence} studies the robustness of Langevin dynamics with $L^2$ error bounds on $\hat s$. See their Theorem 2.1, which requires an error bound that is often \textit{exponentially small in dimension}. Our Theorem \ref{thm:cone-tv} confirms the intuition in \citet{lee2022convergence}, showing that their result is sharp up to the constant in the exponent. 

    \item Langevin dynamics is a natural and widely used example of a Markov chain. Several works have studied the algorithmic robustness of Markov chains, dating as far back as \citet{schweitzer1968perturbation} to more recent investigations in \citet{gaitonde2025algorithmic, zuckerman2026markov}. 
    Our work performs a similar investigation in the context of score estimation error for the canonical Langevin dynamics Markov chain.
    
    \item Several works \citep{block2022generativemodelingdenoisingautoencoders, lee2022convergence, xun2025posterior} investigate the Main Question assuming $L^p$-accurate estimates to \textit{all the annealed} score functions.\footnote{Specifically when the annealed score functions are the convolution of the target distribution with a Gaussian.} Note with $L^p$ accurate estimates to the \textit{annealed} score functions, diffusion models are guaranteed to be successful by results of e.g. \citet{chen2022sampling, bentonnearly}. Instead, our work focuses on when one only has an $L^2$ or $L^p$ accurate estimate of $\grad \log \pitarget$. We remark there is evidence that learning an accurate estimate of $\grad \log \pitarget$ can be computationally easier than learning all the annealed score functions for particular distributions $\pitarget$ \citep{montanari2025computational, koehler2023sampling}.
    
    \item Several other works have studied statistical and computational aspects of score matching to estimate $\grad \log \pitarget$, such as \citet{koehler2022statistical, pabbaraju2023provable, montanari2025computational}. However these works do not answer the Main Question.
\end{itemize}

\subsection{Main Results}
In this work, \textbf{we strongly answer the Main Question in the negative for Langevin dynamics.} We show that an $L^2$-accurate (and more generally $L^p$-accurate) score estimate $\hat s$ to $\grad \log \pitarget$ with respect to $\pitarget$ does not suffice for Langevin dynamics to successfully sample -- this assumption on $L^2$ score estimation is most natural in the context of score estimates $\hat s$ that are learned from data \citep{chen2023probability, koehler2023sampling, koehler2024efficiently}. 
We establish: 
\begin{enumerate}
    \item \textbf{Theorem \ref{thm:cone-tv}, Subsection \ref{subsec:normalinit}:} Consider perhaps the most innocuous possible example, when $\pitarget$ is an isotropic Gaussian in $\R^d$ and when we initialize Langevin dynamics at a standard Gaussian $N(0, I_d)$. When the estimated score $\hat s$ satisfies an arbitrarily small global $L^p$ bound $\E_{\pitarget}[\|\hat s - \grad \log \pitarget\|^p]^{1/p}$ for any $p \ge 1$ in high dimensions, we construct examples where Langevin dynamics run with $\hat s$ from this natural initialization remains far from $\pitarget$ for any $\poly(d)$ time horizon in high dimensions.
    In particular, the TV distance between $\pitarget$ and the law of Langevin dynamics run with $\hat s$ initialization from $N(0, I_d)$ is $1 - e^{-\Omega(d)}$. 
    As a corollary, in high dimensions, the mixing time of Langevin dynamics run with $\hat s$ is also larger than any polynomial.\footnote{See \citet{levin2017markov} for a background on mixing time.}

    \item \textbf{Theorem \ref{thm:dbi-tv}, Subsection \ref{subsec:dbi}:} We prove the phenomenon of Theorem \ref{thm:cone-tv} still applies when Langevin dynamics run with the score estimate $\hat s$ is initialized at a natural choice of warm start. Consider again when the target distribution $\pitarget$ is an isotropic Gaussian in $\R^d$. Consider $n = \poly(d)$ i.i.d. samples from $\pitarget$. We construct a score estimate $\hat s$ \textit{defined in terms of the $n$ i.i.d. samples} such that in high dimensions, $\hat s$ satisfies an arbitrarily small global $L^p$ bound $\E_{\pitarget}[\|\hat s - \grad \log \pitarget\|^p]^{1/p}$ for any $p \ge 1$, while the law of Langevin dynamics run with $\hat s$ initialized at these $n$ samples remains far from $\pitarget$ for any $\poly(d)$ time horizon. Again, the TV distance between $\pitarget$ and the law of Langevin dynamics run with $\hat s$ from this initialization is $1 - e^{-\Omega(d)}$. 

    \item \textbf{Theorem \ref{thm:general-pi}, Subsection \ref{subsec:lower_bd_limit}:} We prove further negative results for a wide range of target distributions and arbitrary initializations in the limit $t \rightarrow \infty$. We show for target distributions $\pitarget$ satisfying Assumption \ref{assumption:general-pi}, as $t \rightarrow \infty$, the TV distance between $\pitarget$ and the law of Langevin dynamics run with $L^2$-accurate scores estimates is arbitrarily close to 1. 

    \item \textbf{In Section \ref{sec:simulation}:} We also validate the practical prescription of Theorem \ref{thm:dbi-tv} via simulation. 
\end{enumerate}
All the above results are for the continuous time Langevin \textit{diffusion}, which is the \textit{idealization} of discrete-time Langevin dynamics. As such we believe our result presents strong evidence against discrete-time Langevin dynamics as well. Moreover, as per Remark \ref{rem:discretization_1}, \ref{rem:discretization_2}, Theorems \ref{thm:cone-tv} and \ref{thm:dbi-tv} directly generalize to the discrete-time Unadjusted Langevin Algorithm, a natural discretization of Langevin dynamics. Theorems \ref{thm:cone-tv} and \ref{thm:dbi-tv} also generalize to strongly-log-concave targets; see Remarks \ref{rem:generalize_target_1} and \ref{rem:generalize_target_2}.

\paragraph{Significance of our results: }
Our results demonstrate the Langevin dynamics is not robust to $L^2$ (more generally $L^p$) score estimation error. We emphasize that lack of robustness is the fundamental cause; the examples of Theorems \ref{thm:cone-tv}, \ref{thm:dbi-tv}are not degenerate.
The target distribution is as simple as an isotropic Gaussian, the score estimates $\hat s$ are Lipschitz, and the initializations considered are natural.

Theorems \ref{thm:cone-tv}, \ref{thm:general-pi} provide `worst-case' or `adversarial' constructions that demonstrate $L^2$ (more generally $L^p$) score estimation error is in and of itself insufficient. This result stands in sharp contrast to diffusion models, where also convergence occurs in $\poly(d)$ time; as such our work provides further support for diffusion models from a novel angle.

\textit{Our main result is Theorem \ref{thm:dbi-tv}}, which we believe carries further practical significance. In particular, this result prescribes one to use \textit{fresh} samples not used in producing the estimate $\hat s$ when performing data-based initialization in practice.
Moreover, this construction of $\hat s$ is -- in some sense -- based on $\hat s$ `memorizing' all the $n$ i.i.d. samples. Such a situation has been empirically confirmed in certain settings in supervised learning with overparametrized neural networks, as per e.g. \citet{arpit2017closer, zhang2016understanding} -- of which score matching is a prominent application.
However, score functions are often trained on the entire dataset in practice. Our results serve to caution against doing so.

\section{Background and Notation}\label{sec:background}

\paragraph{Notation:} We let $\pitarget$ denote the target distribution in $\R^d$. 
For a set $A$, we let $A^C$ denote its complement. We let $(B_t)_{t \ge 0}$ be a $d$-dimensional standard Brownian motion. For a random variable $X$ we let $\cL(X)$ denote its law. We let $N(\theta, \Sigma)$ be a Gaussian with mean $\theta$ and covariance $\Sigma$. We let $\TV(P, Q)$ denote Total Variation (TV) distance between two probability distributions $P$ and $Q$. We let $\delta_x$ denote the Dirac Delta distribution at $x$. 
We let $\ones$ denote the all ones vector. We write $a=\poly(b)$ for reals $a,b>0$ if $a$ is upper bounded by a polynomial of fixed degree in $b$.

\subsection{Langevin Dynamics}
Letting $\pitarget \propto e^{-V(x)}$ be the target distribution on $\mathbb{R}^d$, the associated score function is $\nabla \log \pitarget(x) = -\grad V(x)$. 
The overdamped Langevin dynamics is the stochastic differential equation (SDE)
\begin{equation}
\dd X_t = \nabla \log \pitarget(X_t)\,\dd t + \sqrt{2}\,\dd B_t, 
\qquad X_0 \sim \mu_0,
\label{eq:langevin}
\end{equation}
where $\mu_0$ is arbitrary. 
Under mild regularity conditions, $\pitarget$ is the stationary distribution for the SDE (\ref{eq:langevin}), and $X_t \overset{d}{\to} \pitarget$ as $t \to \infty$ \citep{chiang1989limit}. 

\paragraph{Non-Asymptotic Convergence:} The SDE (\ref{eq:langevin}) converges \textit{efficiently} to $\pitarget$ in $\TV$ (in polynomial time in $d$) when $\pitarget$ satisfies a \textit{Poincar\'e Inequality}, with stronger guarantees on the convergence when $\pitarget$ satisfies a \textit{Log-Sobolev Inequality} \citep{villani2008optimal, bakry2014analysis}. The Poincar\'e Inequality holds for log-concave $\pitarget$ (convex $V(x)$) \citep{bobkov1999isoperimetric}. Meanwhile the Log-Sobolev Inequality holds for strongly-log-concave $\pitarget$ (strongly convex $V(x)$) \citep{bakry2006diffusions}. Specifically, if $\grad^2 V(x) \succeq \alpha I_d$, then the Log-Sobolev constant of $\pitarget$ is at most $\frac1{\alpha}$.\footnote{Or $\frac2{\alpha}$, depending on the normalization used.}
In particular, \textit{for well-conditioned Gaussians, e.g. isotropic Gaussians, we expect the SDE (\ref{eq:langevin}) to sample efficiently in $\TV$ from $\pitarget$}.

\paragraph{Discretization:} One discretizes (\ref{eq:langevin}) to sample from $\pitarget$ with a discrete-time algorithm. There are several discretizations of (\ref{eq:langevin}) in the literature which enjoy polynomial convergence to $\pitarget$ in $\TV$ when $\pitarget$ is a Gaussian and more generally, when $V$ is strongly convex \citep{chewi2024log}, and when $\grad V$ is Lipschitz (which is the case for our lower bound constructions in Theorems \ref{thm:cone-tv}, \ref{thm:dbi-tv}). Perhaps the most natural is the \textit{Unadjusted Langevin Algorithm} (ULA):
\begin{align}\label{eq:ULA}
X_{t+1} = X_t + \eta \nabla \log \pitarget(X_t) + \sqrt{2 \eta}\,\eps_t, \qquad X_0 \sim \mu_0,
\end{align}
where $\eta > 0$ is the step size, $\mu_0$ is arbitrary, and $\eps_t \sim N(0, I_d)$ are i.i.d. Gaussians. This algorithm has been widely studied as a natural discretization of (\ref{eq:langevin}), see e.g. \citet{vempala2019rapid, lee2022convergence, chewi2025analysis}. 
See \citet{chewi2024log} for many more details. Note \textit{all these discretizations aim to simulate the idealization} (\ref{eq:langevin}), see \citet{chewi2024log} for further discussion.

\subsection{Diffusion Models}
Diffusion models were introduced in several key early works \citep{song2019generative, ho2020denoising, song2020denoising, song2020score}, and were originally motivated by the idea of \textit{time-reversal}. One considers an SDE or other stochastic process that converges from $\pitarget$ to $N(0, I_d)$, known as the \textit{forward process}. One then applies the theory of time-reversal of SDEs \citep{haussmann1986time} to obtain a stochastic process that converges from $N(0, I_d)$ to $\pitarget$, known as the \textit{reverse process}. One discretizes the reverse process, yielding an algorithm that samples from $\pitarget$.

For a high-level overview, one frequently used choice of forward process, proposed in \citet{song2019generative, ho2020denoising, song2020score} (though certainly not the only one) is the Ornstein–Uhlenbeck (OU) process:
\begin{align}
\dd \bar X_t = -\bar X_t\, \dd t + \sqrt{2}\, \dd B_t,\qquad \bar X_0 \sim \pitarget. \label{eq:forward_process}
\end{align}
Letting $\pi_t$ denote the law of $\bar X_t$, and fixing a terminal time $T \ge 0$, the corresponding reverse process is the SDE
\begin{align}
\dd X_t = (X_t + 2 \grad \log \pi_{T-t} (X_t))\, \dd t + \sqrt{2}\, \dd B_t,\, X_0 \sim \pi_T\,.\label{eq:reverse_process_ddpm}
\end{align}
By the theory of time-reversal of SDEs \citep{haussmann1986time}, the above reverse process converges to $\pitarget$; moreover by convergence results for the OU process, for appropriate $T > 0$, $\pi_T \approx N(0, I_d)$. This is an example of the Denoising Diffusion Probabilistic Models (DDPM) framework \citep{ho2020denoising}. This is by no means the only type of algorithm subsumed by diffusion models. For example, one can also time-reverse the forward process (\ref{eq:forward_process}) to yield an Ordinary Differential Equation (ODE); this is the Denoising Diffusion Implicit Models (DDIM) framework \citep{song2020denoising}. 

\paragraph{Discretization:} To sample from $\pitarget$ with a discrete-time algorithm, one discretizes (\ref{eq:reverse_process_ddpm}) in the case of the DDPMs, and more generally discretizes the suitable reverse process (e.g. the ODE from DDIMs). This leads to a gradually noised sequence of distributions $\pi_0 \approx \pitarget, \ldots, \pi_k$. Note $\pi_0$ is only approximately equal to $\pitarget$ due to early stopping often employed in practice, see e.g. \citet{karras2022elucidating}.
To do so, one must learn estimates of the score functions $\grad \log \pi_0, \ldots, \grad \log \pi_k$ from i.i.d. data from $\pitarget$ via \textit{score matching}, see e.g. \citep{hyvarinen2005estimation, vincent2011connection, song2019generative, karras2022elucidating}. 

It has been established that with an estimated score $s$ and appropriate discretization, the sampling error in $\TV$ scales linearly with the $L^2$ score estimation error, within $\poly(d)$ iterations. Specifically, the sampling error in $\TV$ scales linearly in a suitable weighted average of the $L^2$ score estimation errors of $\grad \log \pi_0, \grad \log \pi_1, \ldots, \grad \log \pi_k$. This result applies for the reverse process of both DDPMs and DDIMs; see e.g. \citet{chen2022sampling, bentonnearly, li2024accelerating, jiao2025optimal}, among many other works.

\section{Main Results}\label{sec:lower_bounds}
\subsection{Standard normal initialization}\label{subsec:normalinit}
Our first Theorem yields a lower bound for Langevin Dynamics when run with standard normal initialization.
\begin{theorem}[Standard Normal Initialization Lower Bound]\label{thm:cone-tv}
Consider $\pitarget=N(\mu,I_d)$ for any vector $\mu$ with $\|\mu\| = 7\sqrt{d}$. For a universal constant $\alpha=\alpha_{\ref{lem:OU_hitting_time_general}}>0$ given in Lemma \ref{lem:OU_hitting_time_general}, define the score estimate $\hat s(x)$ by
\begin{align*}
\begin{cases} -\alpha x &: \| x \| \le 4\sqrt{d} \\ -(x - \mu) &: \| x \| \ge 5\sqrt{d} \\ -\psi\big(\frac{x}{\sqrt{d}}\big) \cdot \alpha x - \big(1-\psi\big(\frac{x}{\sqrt{d}}\big)\big) \cdot (x - \mu) &: \text{ else}, \end{cases}
\end{align*}
where $\psi(\cdot) \in [0,1]$ is a suitable interpolating bump function given in Lemma \ref{lem:mollifier_def_properties}. Then defining $X_t$ to be the SDE
\begin{equation}
\dd X_t = \hat s(X_t)\,\dd t + \sqrt{2}\,\dd B_t, 
\qquad X_0 \sim N(0, I_d),
\label{eq:langevin_corrupteds1}
\end{equation}
we have:
\begin{enumerate}
    \item Small $L^p$ error: for any $p \ge 1$ and $d \ge d_0(p)$ large enough in terms of $p$,
    \begin{align}\label{eq:score_estimate_small1}
    \E_{x \sim \pitarget}\big[ \| \hat s(x) - \grad \log \pitarget(x)\|^p \big]^{1/p} \le e^{-\Omega(d)}.
    \end{align}
    \item Far from $\pitarget$ in Total Variation on sub-exponential time scales: for any $d \ge d_0$ where $d_0$ is a universal constant and any $T \le e^{c_{\ref{lem:OU_hitting_time_general}} d/2}$ where $c_{\ref{lem:OU_hitting_time_general}}$ is a universal constant given in Lemma \ref{lem:OU_hitting_time_general},     \begin{align}\label{eq:TV_big_1}
    \TV\big( \cL(X_T), \pitarget \big) \ge 1-e^{-\Omega(d)}.
    \end{align}
\end{enumerate}
Furthermore, $\hat s$ is Lipschitz with constant independent of $d$.
\end{theorem}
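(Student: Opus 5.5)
The plan has three parts: show the corrupted score is accurate where $\pitarget$ lives, show the process is trapped near the origin, and conclude by comparing supports. The guiding observation is that $\hat s$ agrees with $\grad \log \pitarget(x) = -(x-\mu)$ outside the ball $B := \{\|x\| \le 5\sqrt d\}$. Since $\|\mu\| = 7\sqrt d$, this ball carries only exponentially small mass under $\pitarget$. Inside $\{\|x\|\le 4\sqrt d\}$, however, $\hat s$ is the drift of an Ornstein--Uhlenbeck process pulling toward $0$. A chain started at $N(0,I_d)$ therefore lives near radius $\sqrt{d/\alpha}$ and should need exponential time to escape.

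\textbf{Small $L^p$ error.} On $B$ we have $\|\hat s(x) - \grad\log\pitarget(x)\| \le \alpha\|x\| + \|x-\mu\| \le C\sqrt d$, using $\psi\in[0,1]$. Hence
\[
\E_{\pitarget}\big[\|\hat s-\grad\log\pitarget\|^p\big]^{1/p} \le C\sqrt d\, \PP_{\pitarget}(B)^{1/p}.
\]
For $Z\sim N(0,I_d)$, the event $x=\mu+Z\in B$ requires $\|Z\|\ge 2\sqrt d$. Gaussian norm concentration ($\|Z\|$ is $1$-Lipschitz with mean at most $\sqrt d$) gives probability at most $e^{-d/2}$. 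The bound is then $C\sqrt d\, e^{-d/(2p)} = e^{-\Omega(d)}$ once $d\ge d_0(p)$.

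\textbf{Lipschitzness.} Each branch is Lipschitz with constant $O(1)$. The interpolation term has gradient bounded by $\|\nabla\psi\|_\infty/\sqrt d$ times $\|\alpha x-(x-\mu)\| = O(\sqrt d)$, which is $O(1)$ by Lemma \ref{lem:mollifier_def_properties}.

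\textbf{TV lower bound.} Let $\tau$ be the first hitting time of $\{\|x\| = 4\sqrt d\}$. Up to $\tau$, $X_t$ coincides with the OU process $\dd Y_t = -\alpha Y_t\,\dd t+\sqrt2\,\dd B_t$ started at $Y_0 = X_0\sim N(0,I_d)$. I would invoke Lemma \ref{lem:OU_hitting_time_general}, which presumably states that for this $\alpha$, started from a point of norm $\le 2\sqrt d$ (or from $N(0,I_d)$), the OU process hits radius $4\sqrt d$ before time $e^{c d}$ with probability at most $e^{-\Omega(d)}$, where $c = c_{\ref{lem:OU_hitting_time_general}}$. Combining this with $\PP(\|X_0\|>2\sqrt d)\le e^{-\Omega(d)}$ yields
\[
\PP(\|X_T\|\ge 4\sqrt d) \le \PP(\tau\le T) \le e^{-\Omega(d)} \qquad \text{for } T\le e^{cd/2}.
\]
Take the test set $A=\{\|x\|< 4\sqrt d\}$. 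Then $\cL(X_T)(A)\ge 1-e^{-\Omega(d)}$, while $\pitarget(A)\le e^{-\Omega(d)}$ by the previous concentration estimate. Hence $\TV \ge 1-e^{-\Omega(d)}$.

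\textbf{Main obstacle.} The hard step is the exponential hitting-time estimate in Lemma \ref{lem:OU_hitting_time_general}; everything else is concentration. If it had to be proved, I would use a union bound over a grid of unit time intervals, which is where the factor $T \le e^{cd/2}$ enters. On each interval, $Y_t$ is Gaussian with covariance at most $\max(1,1/\alpha) I_d$. Taking $\alpha$ of constant size (e.g. $\alpha=1$ or larger) makes $\|Y_t\|\approx\sqrt{d/\alpha}$ at each grid point, so leaving radius $4\sqrt d$ there has probability $e^{-c'd}$. Within an interval, the excursion is controlled by Doob's maximal inequality applied to the Brownian increment plus the contractive drift, which gives a per-interval failure probability of $e^{-c'd}$. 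Summing over $e^{cd/2}$ intervals with $c<c'$ gives the claim.
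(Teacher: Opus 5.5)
Your proposal is correct and is essentially the paper's proof: the same $O(\sqrt d)\cdot\pitarget(\{\|x\|\le 5\sqrt d\})^{1/p}$ bound for the $L^p$ error, the same gradient computation for the Lipschitz constant, and the same pathwise identification of $X_t$ with the OU process $\dd Y_t=-\alpha Y_t\,\dd t+\sqrt2\,\dd B_t$ up to the exit time of $\{\|x\|<4\sqrt d\}$, tested on the set $\{\|x\|\ge 4\sqrt d\}$ (the complement of yours). The only adjustment is that Lemma \ref{lem:OU_hitting_time_general} actually bounds the probability that $\sup_{t\le T}\|\int_0^t e^{-\alpha(t-s)}\,\dd B_s\|$ exceeds $0.1\sqrt d$; the paper converts this into your hitting-time statement via $Y_t=e^{-\alpha t}x_0+\sqrt2\int_0^t e^{-\alpha(t-s)}\,\dd B_s$ and the triangle inequality (your threshold $\|x_0\|\le 2\sqrt d$ works as well as the paper's $1.1\sqrt d$, since $(4-2)/\sqrt2>0.1$).
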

Since $\hat s$ is Lipschitz, by Theorem 5.2.1 of \citet{oksendal2013stochastic}, the SDE (\ref{eq:langevin_corrupteds1}) has a unique strong solution. We provide proof ideas next, and the full proof in Subsection \ref{subsec:thm_cone_pf}.

Theorem \ref{thm:cone-tv} establishes that in high dimensions, the law of (\ref{eq:langevin_corrupteds1}) and $\pitarget$ are very far away in $\TV$ in \textit{any} polynomial time scales, despite $\hat s$ enjoying extremely small $L^p$ accuracy in high dimensions. 
However, this $\hat s$ is a adversarial or `worst-case' construction.
The failure mechanism for this Theorem arises due to initializing at $N(0, I_d)$, which has low density with respect to $\pitarget$. In such regions, small $L^2$ error under $\pitarget$ provides little control over $\hat s$. See the proof ideas for more details.
We also note the choice $7$ in $\|\mu\|=7\sqrt{d}$ is arbitrary, and the desired effect occurs for other $\mu$; see the simulations supporting Theorem \ref{thm:cone-tv} in Section \ref{sec:simulation}.

While Theorem \ref{thm:cone-tv} applies for dimensions $d \ge d_0(p)$ larger than a suitable universal constant, in our simulations in Section \ref{sec:simulation}, we see effects (for Theorem \ref{thm:dbi-tv}, which uses similar results on high-dimensional concentration of measure) taking hold for $d=50, 200$.\footnote{One can readily find a bound on $d_0$, $d_0(p)$ by tracking the proof; we did not pursue this for notational simplicity.}

\begin{corollary}[Mixing Time]\label{corr:mixing_time}
The conclusion of Theorem \ref{thm:cone-tv} also applies for any initialization $x_0$ with $\|x_0\| \le 1.1\sqrt{d}$. Hence the \textup{mixing time} of (\ref{eq:langevin_corrupteds1}) to $\pitarget$ is at least $e^{c_{\ref{lem:OU_hitting_time_general}} d/2}$.
\end{corollary}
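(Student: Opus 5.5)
We reduce the corollary to the proof of Theorem \ref{thm:cone-tv}, observing that this proof uses the initialization $N(0,I_d)$ only through one fact: with probability $1-e^{-\Omega(d)}$ the initial point lies well inside the ball $\{\|x\|\le 4\sqrt{d}\}$ where $\hat s(x)=-\alpha x$. Fix a deterministic $x_0$ with $\|x_0\|\le 1.1\sqrt{d}$. Inside the ball of radius $4\sqrt d$ the dynamics (\ref{eq:langevin_corrupteds1}) coincide with the Ornstein--Uhlenbeck process $\dd Y_t=-\alpha Y_t\,\dd t+\sqrt2\,\dd B_t$, $Y_0=x_0$, driven by the same Brownian motion. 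So $X_t=Y_t$ for all $t\le\tau$, where $\tau$ is the first exit time of $Y$ from that ball.

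The first step is to invoke Lemma \ref{lem:OU_hitting_time_general}. We expect it to bound $\PP(\tau\le T)\le e^{-c d}$ for $T\le e^{c_{\ref{lem:OU_hitting_time_general}}d/2}$, uniformly over starting points of norm at most some constant times $\sqrt d$. We need $1.1\sqrt d$ to be within that range. The constant $\alpha$ is chosen so that the stationary radius $\sqrt{d/\alpha}$ is comfortably below $4\sqrt d$ (roughly $\alpha$ of order one), and $1.1\sqrt d$ is also well below $4\sqrt d$. If the lemma is stated only for Gaussian initialization, we redo its argument for a fixed $x_0$. The OU marginal at time $t$ is $N(e^{-\alpha t}x_0,\ \tfrac{1-e^{-2\alpha t}}{\alpha}I_d)$, and its norm concentrates below $(1.1+\alpha^{-1/2})\sqrt d+O(\sqrt d\,\epsilon)$ with probability $1-e^{-\Omega(d)}$ by Gaussian norm concentration. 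We then pass from fixed times to a uniform-in-time bound by union bounding over a grid of mesh $1$ on $[0,T]$. Between grid points we control the excursion by the Lipschitz drift plus Brownian increments of size $O(\sqrt d)$ with Gaussian tails. This gives $T\cdot e^{-cd}\le e^{-cd/2}$.

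The second step is to translate this into a TV bound. On the event $\{\tau>T\}$ we have $\|X_T\|\le 4\sqrt d$. Let $A=\{\|x\|\le 4\sqrt d\}$. Then
\[
\TV(\cL(X_T),\pitarget)\ \ge\ \PP(X_T\in A)-\pitarget(A)\ \ge\ 1-e^{-\Omega(d)}-\pitarget(A).
\]
Under $\pitarget=N(\mu,I_d)$ with $\|\mu\|=7\sqrt d$, any point of $A$ is at distance at least $3\sqrt d$ from $\mu$. Gaussian norm concentration then gives $\pitarget(A)\le \PP(\|Z\|\ge 3\sqrt d)\le e^{-\Omega(d)}$, which yields the conclusion of Theorem \ref{thm:cone-tv}. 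For the mixing-time claim, the mixing time is defined as a supremum over initial points of the time to reach TV at most $1/4$. Our initial point $x_0$ gives TV at least $1-e^{-\Omega(d)}>1/4$ at every $T\le e^{c_{\ref{lem:OU_hitting_time_general}}d/2}$, for $d\ge d_0$. Hence the mixing time exceeds $e^{c_{\ref{lem:OU_hitting_time_general}}d/2}$.

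The main obstacle is the first step, namely checking that the hitting-time lemma covers deterministic starts of norm up to $1.1\sqrt d$ with the same constants $c_{\ref{lem:OU_hitting_time_general}}$ and $\alpha$. In particular we must ensure the margin between $1.1\sqrt d$ (and the OU stationary radius) and $4\sqrt d$ is large enough for the exponential exit-time estimate. If the lemma is phrased via a supermartingale such as $e^{\lambda(\|Y_t\|^2-\text{const}\cdot d)}$, the initial value at $\|x_0\|^2\le 1.21d$ is only $e^{O(\lambda d)}$ smaller than at the boundary. Optional stopping then gives exit probability $e^{-\Omega(d)}$ per unit time, and multiplying by $T$ loses only the factor absorbed by the $c/2$ in the exponent.
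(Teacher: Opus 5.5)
Your proposal is correct and is essentially the paper's argument. The proof of Theorem \ref{thm:cone-tv} already works for a deterministic start $x_0$ with $\|x_0\|\le 1.1\sqrt d$: Lemma \ref{lem:OUescapeprob} is stated exactly that way. The TV computation there then goes through with $\PP(\|X_0\|\le 1.1\sqrt d)=1$, comparing the mass of $\{\|x\|\ge 4\sqrt d\}$ (or, as you do, its complement) under $\cL(X_T)$ and $\pitarget$. This gives $\TV\ge 1-e^{-\Omega(d)}$ for all $T\le e^{c_{\ref{lem:OU_hitting_time_general}}d/2}$, and hence the mixing-time bound.

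The obstacle you anticipate does not arise. Lemma \ref{lem:OU_hitting_time_general} controls only the noise term $\sup_{t\le T}\|\int_0^t e^{-\alpha(t-s)}\,\dd B_s\|$, with no dependence on the starting point. The start is absorbed by the explicit solution $\bar X_t=e^{-\alpha t}x_0+\sqrt2\int_0^t e^{-\alpha(t-s)}\,\dd B_s$ and the triangle inequality. So neither your grid fallback nor a supermartingale argument is needed. In any case, a Gronwall-type bound of the drift over mesh-$1$ intervals would lose a factor $e^{\alpha}$, which is why the paper's grid uses mesh $1/\alpha$.
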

\begin{remark}[Discretization]\label{rem:discretization_1}
Consider any $0 < T < e^{c_{\ref{lem:OU_hitting_time_general}} d/2}$ and any discretization $\hat X_T$ of (\ref{eq:langevin_corrupteds1}) such that $\TV\big( \cL(\hat X_T), \cL(X_T) \big) \le c$. 
By Girsanov's Theorem, as $\hat s$ is Lipschitz, this applies for ULA initialized at $N(0, I_d)$ with suitable step size. 
See Chapter 4 of \citep{chewi2024log}.
By the Triangle Inequality for $\TV$ we obtain the following lower bound for the discretization: 
\begin{align*}
\TV\big( \cL(\hat X_T), \pitarget \big) \ge 1-c-e^{-\Omega(d)}.
\end{align*} 
We note that closeness in $\TV$ likely applies for many discretizations of Langevin dynamics, as they aim to faithfully simulate the Langevin SDE (\ref{eq:langevin}) \citep{chewi2024log}.
\end{remark}

\textbf{Proof ideas of Theorem \ref{thm:cone-tv}: } 
We show the proof ideas for each of the claimed parts of the Theorem. 

\textbf{Small $L^p$ error:}
The $L^p$ error under $\pitarget$ of $\| \hat s - \grad \log \pitarget\|$ is \textit{exponentially small in high dimensions} by results on Gaussian concentration of the norm (see Lemma \ref{lem:gaussianconcentration}). Under $\pitarget$, the set $\big\{x \in \R^d:\|x \| \le 5\sqrt{d}\big\}$ has exponentially small mass, while in this region, $\alpha \| \hat{s}(x) - \grad \log \pitarget(x)\|$ is at most polynomial in $d$. Essentially, in high dimensions one can `hide' the bad set $\big\{x \in \R^d:\|x \| \le 5\sqrt{d}\big\}$ w.r.t. $\pitarget$. 

\textbf{Lipschitz $\hat s$:}
$\hat s$ is Lipschitz by direct calculation, as $\psi(\cdot)$ from Lemma \ref{lem:mollifier_def_properties} has range in $[0,1]$ and is Lipschitz with universal constant bound on its Lipschitz constant.

\textbf{Far from $\pitarget$: } Suppose the initialization $x_0$ is such that $\| x_0 \| \le 1.1\sqrt{d}$; this is exponentially likely when we draw $x_0 \sim N(0, I_d)$ by results on Gaussian concentration of the norm (Lemma \ref{lem:gaussianconcentration}). 

If $\| x_0 \| \le 1.1\sqrt{d}$, before reaching $A := \big\{x \in \R^d:\|x \| \ge 4\sqrt{d}\big\}$, we have $\hat s(x) = -\alpha x$. Thus the escape time of (\ref{eq:langevin_corrupteds1}) from $A^C$ and the escape time of the following SDE are identical in distribution:
\begin{align}\label{eq:rescaled_OU_1}
\dd \bar X_t = -\alpha \bar X_t \dd t + \sqrt{2} \dd B_t, \qquad \bar X_0 = x_0.
\end{align}
The SDE (\ref{eq:rescaled_OU_1}) is a rescaled Ornstein-Uhlenbeck (OU) process with stationary distribution $N(0, \frac1{\alpha} I_d)$.
Since $\alpha$ is a large enough universal constant, for initializations $\|x_0\| \le 1.1\sqrt{d}$ well within $A^C$, the escape time of (\ref{eq:rescaled_OU_1}) from $A^C$ is exponential in $d$ with very high probability. 
Specifically, letting $X_t(x_0)$ denote the law of (\ref{eq:langevin_corrupteds1}) when initialized at $x_0$, we have the following Lemma. 
\begin{lemma}\label{lem:OUescapeprob}
Suppose $d \ge K_{\ref{lem:OU_hitting_time_general}}$ for a universal constant $K_{\ref{lem:OU_hitting_time_general}}>0$. Consider any $T \le e^{c_{\ref{lem:OU_hitting_time_general}} d/2}$. Let $\tau(x_0) = \inf_{t \ge 0} \big\{ \|X_t(x_0)\| \ge 4\sqrt{d} \big\}$. If $\|x_0\| \le 1.1\sqrt{d}$, then with probability at least $1 - 3 \alpha e^{-c_{\ref{lem:OU_hitting_time_general}} d}$, $\tau(x_0) > T$.
\end{lemma}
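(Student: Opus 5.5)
Since $\hat s(x) = -\alpha x$ on $A^C=\{\|x\|<4\sqrt d\}$, I will couple $X_t(x_0)$ with the rescaled OU process $\bar X_t$ of (\ref{eq:rescaled_OU_1}), driving both by the same Brownian motion from the same $x_0$. The SDEs have identical Lipschitz drift on $A^C$, so pathwise uniqueness gives $X_t=\bar X_t$ for all $t\le\tau(x_0)$. Hence $\tau(x_0)$ equals the escape time $\bar\tau$ of $\bar X$ from the open ball of radius $4\sqrt d$. It therefore suffices to show $\PP(\bar\tau\le T)\le 3\alpha e^{-c d}$ for $T\le e^{cd/2}$, with $c=c_{\ref{lem:OU_hitting_time_general}}$. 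I expect this is exactly what Lemma \ref{lem:OU_hitting_time_general} provides for the OU process with rate $\alpha$ started inside radius $1.1\sqrt d$. The plan is to invoke it directly, but I sketch how I would prove such a hitting-time bound.

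For the OU estimate I will discretize time on a unit grid $t_k=k$ for $k=0,1,\dots,\lceil T\rceil$. I will bound two kinds of events.

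\textbf{Grid points.} At each grid time, $\bar X_{t_k}\sim N(e^{-\alpha k}x_0,\sigma_k^2 I_d)$ with $\sigma_k^2=(1-e^{-2\alpha k})/\alpha\le 1/\alpha$. Because $\alpha$ is a large universal constant, the mean has norm at most $1.1\sqrt d$ and $\sigma_k\sqrt d\le\sqrt{d/\alpha}$ is small. Gaussian norm concentration (Lemma \ref{lem:gaussianconcentration}) then gives $\PP(\|\bar X_{t_k}\|\ge 2\sqrt d)\le e^{-c'd}$.

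\textbf{Within an interval.} On $[t_k,t_{k+1}]$, I will write $\bar X_t=e^{-\alpha(t-t_k)}\bar X_{t_k}+\sqrt2\int_{t_k}^t e^{-\alpha(t-s)}\,\dd B_s$. On the event $\|\bar X_{t_k}\|<2\sqrt d$, escaping the ball requires the supremum of the stochastic-integral term over the unit interval to exceed $2\sqrt d$. I would bound that supremum via a Gaussian supremum and Borell--TIS concentration, or alternatively by $\sup_{s\le 1}\|B_{t_k+s}-B_{t_k}\|$ combined with a Gronwall-type bound, giving probability $e^{-c''d}$.

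A union bound over $\lceil T\rceil+1\le 2e^{cd/2}$ intervals then yields a total of at most $e^{-cd}$ times a constant, provided $c$ is chosen smaller than $\min(c',c'')/2$. The factor $\alpha$ in the statement suggests the lemma actually uses a grid of mesh $1/\alpha$, which is the natural time scale of the process. That adds $\alpha$ intervals per unit time, and this is where the $3\alpha$ prefactor comes from.

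The main obstacle is the within-interval supremum estimate. A crude Brownian bound works only if the drift is handled carefully, since the drift points inward and only helps. I would first try the simpler observation that $\|\bar X_t - e^{-\alpha(t-t_k)}\bar X_{t_k}\|\le \sqrt2\,\sup_{s\le t-t_k}\|B_{t_k+s}-B_{t_k}\|\cdot C$, obtained by integration by parts of the OU convolution. I would then apply standard Gaussian concentration for the sup of the Brownian norm over an interval of length $1/\alpha$, which is about $\sqrt{2d/\alpha}\ll 2\sqrt d$. The requirement $d\ge K_{\ref{lem:OU_hitting_time_general}}$ absorbs lower-order terms.
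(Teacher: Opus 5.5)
Your proposal is correct and follows the paper's proof. You identify $X_t(x_0)$ with the rescaled OU process up to $\tau(x_0)$ and write $\bar X_t = e^{-\alpha t}x_0 + \sqrt2\int_0^t e^{-\alpha(t-s)}\,\dd B_s$. Escape before $T$ then forces $\|\int_0^t e^{-\alpha(t-s)}\,\dd B_s\| \ge (4-1.1)\sqrt d/\sqrt2 > 0.1\sqrt d$ for some $t\le T$, which is exactly the event bounded in Lemma \ref{lem:OU_hitting_time_general}. That lemma is stated for the zero-started stochastic convolution, not for the OU process started at $x_0$, so your direct invocation needs this triangle-inequality step spelled out.

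Your backup sketch with mesh $1/\alpha$ mirrors the paper's proof of that lemma, with integration by parts replacing Dudley chaining. However, the unit-grid variant with the crude Brownian/Gronwall bound does not work. There $2\sqrt2\sup_{s\le1}\|B_{t_k+s}-B_{t_k}\|\approx 2\sqrt{2d}$ already exceeds the $2\sqrt d$ margin, so it cannot give an exponentially small probability.
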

See Subsection \ref{subsec:thm_cone_pf} for the proof. Now to establish the desired lower bound on $\TV\big(\cL(X_T), \pitarget\big)$, we use that $A^C$ has exponentially small mass under $\pitarget$ by Gaussian concentration (Lemma \ref{lem:gaussianconcentration}), while $X_t$ only reaches $A$ with exponentially small probability within $[0,T]$ by Lemma \ref{lem:OUescapeprob}. Thus, $\pitarget$ puts almost all its mass on $A$, while $\cL(X_T)$ puts almost no mass on it, yielding the desired lower bound on $\TV$ distance. The full proof is in Subsection \ref{subsec:thm_cone_pf}.

\vspace{0.1cm}
\begin{remark}[Generalizing $\pitarget$ in Theorem \ref{thm:cone-tv}]\label{rem:generalize_target_1}
Theorem \ref{thm:cone-tv} generalizes to when $\pitarget \propto e^{-V(x)}$ is strongly log-concave (when $V(x)$ is strongly convex) and when $\grad V$ is Lipschitz. We define $\hat s$ analogously to Theorem \ref{thm:cone-tv}, except each instantiation of $-(x-\mu)$ is replaced with $-\grad V$. The escape time of (\ref{eq:langevin_corrupteds1}) from $\big\{x: \|x\| \le 4\sqrt{d}\big\}$ still equals that of (\ref{eq:rescaled_OU_1}), as here $\hat s$ is still $-\alpha x$ in this region. The proof then proceeds identically as the proof of Theorem \ref{thm:cone-tv} discussed above. The one difference is to establish that the $L^p$ error of $\| \hat s - \grad \log \pitarget\|$ with respect to $\pitarget$ is small. Here rather than concentration of the Gaussian measure, we apply concentration of Lipschitz functions of strongly log-concave functions \citep{bakry2014analysis}.
\end{remark}

\subsection{Data-based initialization}\label{subsec:dbi}
A natural strategy for initializing Langevin dynamics when one has a score estimate $\hat s$ learned from data drawn from $\pitarget$ is \textit{data-based initialization}, studied in \citet{koehler2023sampling, koehler2024efficiently}. 
Here, one initializes Langevin dynamics from i.i.d. samples $x_1, \ldots, x_n$ drawn from $\pitarget$: we let $X_0 \sim \frac1n \sum_{i=1}^n \delta_{x_i}$, the empirical distribution of these samples.
This is a natural initialization when one has i.i.d. samples from $\pitarget$ and aims to produce more of them.
As discussed in \citet{koehler2023sampling}, p. 4, data-based initialization is also closely related to contrastive divergence \citep{hinton2002training, hinton2012practical, xie2016theory, gao2018learning}. 

\citet{koehler2023sampling, koehler2024efficiently} established several appealing properties of data-based initialization:

1) It is robust to $L^2$-error (with respect to $\pitarget$) in the score estimates $\hat s$, a natural assumption when $\hat s$ is learned from data drawn from $\pitarget$.

2) With $n=\poly(d)$ samples, it can alleviate the exponential in $d$ mixing time of Langevin dynamics from many mixture distributions \citep{bovier2004metastability, bovier2005metastability, menz2014poincare} (though Langevin dynamics mixes rapidly for $\pitarget$ in Theorem \ref{thm:dbi-tv}). 

Our result, Theorem \ref{thm:dbi-tv}, complements the work of \citet{koehler2023sampling, koehler2024efficiently}, by showing that the robustness of data-based initialization only applies when \textit{fresh} samples different from those used to learn $\hat s$ are used for the initialization. 
We believe this serves as an important warning to use \textit{fresh} samples when applying data-based initialization or similar methods in practice.
This is the main result of this paper.

To state Theorem \ref{thm:dbi-tv}, we need the following definition.
\begin{definition}[General Position]\label{def:general_position}
We say $x_1, \ldots, x_n \in \R^d$ are in \textit{general position} if $\| x_i - x_j \| \ge 0.4\sqrt{d}$ for all $1 \le i \neq j \le n$ and $0.5\sqrt{d} \le \| x_i \| \le 2\sqrt{d}$ for all $1 \le i \le n$. 

Note by Triangle Inequality that if $x_1, \ldots, x_n$ are in general position, for all $x \in \R^d$, exactly one of the following holds: 1) $\|x-x_i\| \ge 0.16\sqrt{d}$ for all $1 \le i \le n$, or 2) there is a \textit{unique} $i, 1 \le i \le n$ such that $\| x - x_i \| < 0.16 \sqrt{d}$.
\end{definition}
\begin{theorem}[Data-Based Initialization Lower Bound]\label{thm:dbi-tv}
Consider $\pitarget = N(0, I_d)$. 
Consider $n = \poly(d)$ samples $x_1, \ldots, x_n \in \R^d$ drawn i.i.d. from $\pitarget$. 
Then $x_1, \ldots, x_n$ are in general position with probability at least $1-e^{-\Omega(d)}$ for $d \ge d_0$ where $d_0$ is a universal constant, and if $x_1, \ldots, x_n$ are in general position the following holds. For a universal constant $\alpha=\alpha_{\ref{lem:OU_hitting_time_general}}>0$ given in Lemma \ref{lem:OU_hitting_time_general}, define the score estimate $\hat s(x)$ as follows. Let $\psi(\cdot) \in [0,1]$ be a suitable interpolating bump function given in Lemma \ref{lem:mollifier_def_properties}. Now:
\begin{enumerate}
    \item If $\|x-x_i\| \ge 0.16\sqrt{d}$ for all $1 \le i \le n$, $\hat s(x) = -x$.
    \item Otherwise, we let $i \in [n]$ be the \emph{unique} index such that $\|x-x_i\| < 0.16\sqrt{d}$.
    If $\|x-x_i\| \le 0.15\sqrt{d}$, set $\hat s(x) = -\alpha(x-x_i)$. 
    Else if $0.15\sqrt{d} < \|x-x_i\| < 0.16\sqrt{d}$, set $\hat s(x) = -\psi\big( \frac{100(x-x_i)-11}{\sqrt{d}} \big) \cdot \alpha (x-x_i) - \big(1-\psi\big( \frac{100(x-x_i)-11}{\sqrt{d}} \big) \big) \cdot x$.
\end{enumerate}
Then defining $X_t$ to be the SDE
\begin{equation}
\dd X_t = \hat s(X_t)\,\dd t + \sqrt{2}\,\dd B_t, 
\qquad X_0 \sim \frac1n \sum_{i=1}^n \delta_{x_i},
\label{eq:langevin_corrupteds2}
\end{equation}
we have:
\begin{enumerate}
    \item Small $L^p$ error: for any $p \ge 1$ and $d \ge d_0(p)$ large enough in terms of $p$,
\begin{align}\label{eq:score_estimate_small2}
    \E_{x \sim \pitarget}[ \| \hat s(x) - \grad \log \pitarget(x)\|^p ]^{1/p} \le e^{-\Omega(d)}.
    \end{align}
    \item Far from $\pitarget$ in Total Variation on sub-exponential time scales: for $d\ge d_0$ where $d_0$ is a universal constant and any $T \le e^{c_{\ref{lem:OU_hitting_time_general}} d/2}$ where $c_{\ref{lem:OU_hitting_time_general}}$ is a universal constant given in Lemma \ref{lem:OU_hitting_time_general},     \begin{align}\label{eq:TV_big_2}
    \TV\big( \cL(X_T), \pitarget \big) \ge 1-e^{-\Omega(d)}.
    \end{align}
\end{enumerate}
Finally, $\hat s$ is Lipschitz with constant independent of $d$.
\end{theorem}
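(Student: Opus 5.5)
The plan has four parts: show the samples are in general position, bound the $L^p$ error, bound the escape probability, and conclude via a TV argument, mirroring the proof of Theorem \ref{thm:cone-tv}.

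\textbf{General position.} First I show the i.i.d. samples are in general position with probability $1-e^{-\Omega(d)}$.
\begin{itemize}
\item Each norm satisfies $\|x_i\|\in[0.5\sqrt d,2\sqrt d]$ with probability $1-e^{-\Omega(d)}$ by Lemma \ref{lem:gaussianconcentration}, since $\|x_i\|\approx\sqrt d$.
\item Each difference $x_i-x_j\sim N(0,2I_d)$ has norm about $\sqrt{2d}$, so $\|x_i-x_j\|\ge 0.4\sqrt d$ except with probability $e^{-\Omega(d)}$.
\item A union bound over the $n^2=\poly(d)$ pairs keeps the failure probability at $e^{-\Omega(d)}$.
\end{itemize}
From now on I condition on general position, so $\hat s$ is well defined by the uniqueness remark in Definition \ref{def:general_position}.

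\textbf{Lipschitz property.} This is checked piecewise.
\begin{itemize}
\item On each region $\hat s$ is linear or an interpolation of two linear maps.
\item $\psi$ takes values in $[0,1]$ and is $O(1)$-Lipschitz in its rescaled argument. On the annulus $0.15\sqrt d<\|x-x_i\|<0.16\sqrt d$ the argument changes by $O(1)$ across a width of $0.01\sqrt d$. I read the intended argument as $\psi$ of something like $100\|x-x_i\|/\sqrt d-15$.
\item The product rule then gives a gradient bound $O(\alpha)+O(1/\sqrt d)\cdot\|\alpha(x-x_i)-x\|$. Here $\|x\|\le 2.16\sqrt d$ and $\|x-x_i\|\le 0.16\sqrt d$, so this is $O(1)$.
\item Continuity across the region boundaries follows from $\psi$ being $1$ and $0$ at the endpoints.
\end{itemize}

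\textbf{Small $L^p$ error.} The error $\hat s(x)+x$ vanishes outside the union $U=\bigcup_i B(x_i,0.16\sqrt d)$. On $U$ it is at most $(\alpha+1)(\|x\|+2\sqrt d)$, i.e. $O(\sqrt d)$.
\begin{itemize}
\item For each fixed center $x_i$ with $\|x_i\|\le 2\sqrt d$, the Gaussian mass $\pitarget(B(x_i,0.16\sqrt d))$ is $e^{-\Omega(d)}$. To see this, note that $\|x-x_i\|^2=\|x\|^2-2\langle x,x_i\rangle+\|x_i\|^2$. Here $\|x\|^2\approx d$ concentrates and $\langle x,x_i\rangle\sim N(0,\|x_i\|^2)$ is $O(\sqrt d\cdot\sqrt d)$ only with probability $e^{-\Omega(d)}$ at the needed level. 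Hence $\|x-x_i\|^2\ge d+0.25d-o(d)$ typically, which is far above $0.0256d$.
\item A union bound over $n=\poly(d)$ balls gives $\pitarget(U)=e^{-\Omega(d)}$.
\item By Cauchy--Schwarz, $\E[\|\hat s+x\|^p]\le \E[\|\cdot\|^{2p}]^{1/2}\pitarget(U)^{1/2}\le \poly_p(d)e^{-\Omega(d)}$. Taking $p$-th roots yields $e^{-\Omega(d)}$ for $d\ge d_0(p)$.
\end{itemize}

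\textbf{Far in TV.} Start the dynamics at $X_0=x_i$. Until $X_t$ leaves $B(x_i,0.15\sqrt d)$, $\hat s(X_t)=-\alpha(X_t-x_i)$. Therefore $Y_t=X_t-x_i$ is exactly the rescaled OU process (\ref{eq:rescaled_OU_1}) started at $0$, with the same exit time.
\begin{itemize}
\item I invoke Lemma \ref{lem:OU_hitting_time_general}, applied with ball radius $0.15\sqrt d$ and $\alpha$ large enough. This is the main obstacle: the lemma must be stated in a form covering general radius $r\sqrt d$ with $\alpha$ chosen so that $r^2\alpha\gg 1$. The stationary law $N(0,\alpha^{-1}I_d)$ has norm about $\sqrt{d/\alpha}$, far below $0.15\sqrt d$. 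Its exit probability over time $T\le e^{cd/2}$ is then at most $O(\alpha)e^{-cd}$, exactly as in Lemma \ref{lem:OUescapeprob}.
\item So with probability $1-e^{-\Omega(d)}$, $X_T\in U$. This holds for each $i$, hence also under the empirical initialization.
\item Meanwhile $\pitarget(U)=e^{-\Omega(d)}$. Hence $\TV(\cL(X_T),\pitarget)\ge \PP(X_T\in U)-\pitarget(U)\ge 1-e^{-\Omega(d)}$.
\end{itemize}
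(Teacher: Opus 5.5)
Your proposal is correct and follows the paper's proof step for step. The steps are: general position by norm concentration plus a union bound; zero score error off the union $U$ of the $0.16\sqrt d$-balls, whose Gaussian mass you bound by expanding $\|x-x_i\|^2$ (a valid substitute for the paper's one-dimensional projection bound); reduction of the exit from $B(x_i,0.15\sqrt d)$ to an OU process started at $0$; and testing TV on $U$.

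The step you flag as the main obstacle is not one. Before exit, $X_t-x_i=\sqrt2\int_0^t e^{-\alpha(t-s)}\,\dd B_s$, so leaving the ball forces $\|\int_0^t e^{-\alpha(t-s)}\,\dd B_s\|\ge 0.15\sqrt d/\sqrt2>0.1\sqrt d$. Lemma~\ref{lem:OU_hitting_time_general} therefore applies exactly as stated.
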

Note the construction of $\hat s$ is justified by the discussion in Definition \ref{def:general_position}. Again since $\hat s$ is Lipschitz, by Theorem 5.2.1 of \citet{oksendal2013stochastic}, the SDE (\ref{eq:langevin_corrupteds1}) has a unique strong solution. We prove Theorem \ref{thm:dbi-tv} in Subsection \ref{subsec:dbipfs}; at a high level, the proof is similar to that of Theorem \ref{thm:cone-tv}.
A schematic of the construction in Theorem \ref{thm:dbi-tv} is given in Figure \ref{fig:plot_dbi}.
Finally, while Theorem \ref{thm:dbi-tv} applies for dimensions $d \ge d_0(p)$ large enough, in our simulations in Section \ref{sec:simulation}, we see the claimed effects taking hold for $d=50, 200$.

\begin{figure}
    \centering
    \includegraphics[width=0.4\textwidth]{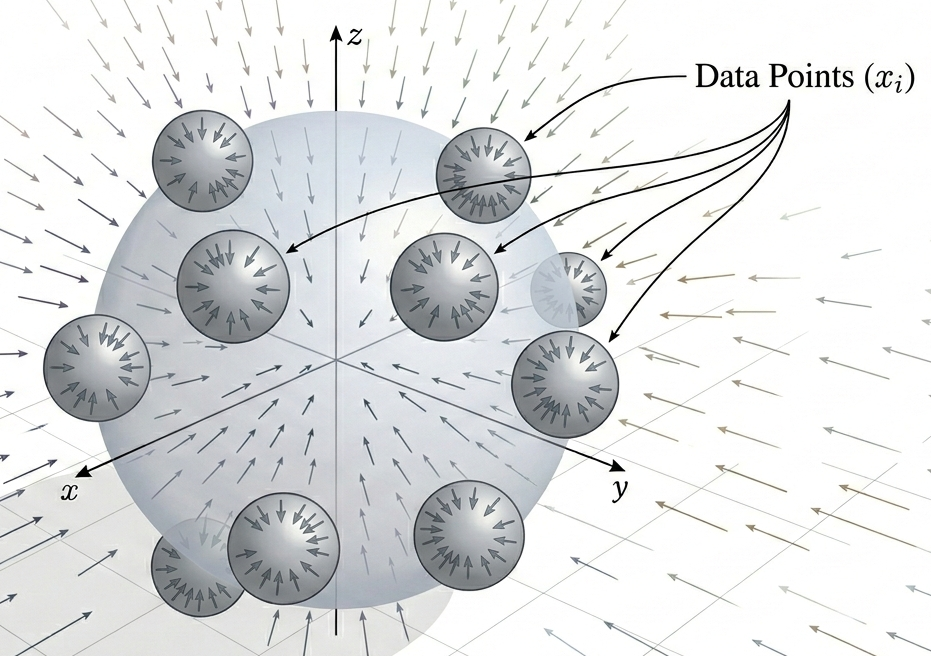}
    \caption{A schematic of the construction in Theorem \ref{thm:dbi-tv}. For each ball centered around each $x_i$, the gradient field is perturbed to point towards the center of the ball.}
    \label{fig:plot_dbi}
\end{figure}

Note the construction of $\hat s$ in Theorem \ref{thm:dbi-tv} is based on $\hat s$ `memorizing' the score function of $N(x_i, \frac1{\alpha} I_d)$ for each $x_i$. One can view $\hat s$ as -- in some sense -- having memorized the training samples. This is a natural situation to expect in practice when performing supervised learning with overparametrized neural networks -- of which score matching is a prominent application -- as per e.g. \citet{arpit2017closer, zhang2016understanding}. 

In our simulations in Section \ref{sec:simulation}, we show such situations can occur when estimating $\grad \log \pitarget$ with an overparametrized neural network, for $\pitarget$ as simple as a Gaussian.
There, we also show that Langevin dynamics run with the score estimate $\hat s$ initialized at fresh samples performs significantly better than when initialized at the samples used to learn $\hat s$. 
As such, our simulations validate the prescription from Theorem \ref{thm:dbi-tv}.

\vspace{0.1cm}
\begin{remark}[Discretization]\label{rem:discretization_2}
Analogously to Theorem \ref{thm:cone-tv}, Theorem \ref{thm:dbi-tv} generalizes to any discretization $\hat X_T$ of (\ref{eq:langevin_corrupteds2}) that is TV-close to $(\ref{eq:langevin_corrupteds2})$, such as ULA initialized from $\frac1n \sum_{i=1}^n \delta_{x_i}$ with suitable step size. 
\end{remark}

\vspace{0.1cm}
\begin{remark}[Generalizing $\pitarget$ in Theorem \ref{thm:dbi-tv}]\label{rem:generalize_target_2}
As with Theorem \ref{thm:cone-tv}, Theorem \ref{thm:dbi-tv} also holds for strongly-log-concave $\pitarget$. This is because for $n=\poly(d)$, $x_1, \ldots, x_n \stackrel{\text{i.i.d.}}{\sim} \pitarget$ are in general position with probability at least $1-e^{-\Omega(d)}$ for $d \ge d_0$, where $d_0$ is a universal constant. From here, one can follow the exact same proof as that of Theorem \ref{thm:dbi-tv}.
Specifically, for our concentration results we now use concentration of Lipschitz functions of strongly-log-concave measures \citep{bakry2014analysis}.
\end{remark}

\subsection{Lower bounds for general target distributions}\label{subsec:lower_bd_limit}
In this section, we consider a broad class of target distributions $\pitarget$ beyond the Gaussian case.
For \textit{all} such distributions and \textit{all} initializations, we establish a lower bound on the efficacy of Langevin dynamics in the asymptotic limit $t \rightarrow \infty$.
We consider a target distribution $\pitarget$ on $\R^d$ that satisfies the following assumptions. 
\begin{assumption}
\label{assumption:general-pi}
    Assume that (1) $\nabla \log \pitarget$ exists almost everywhere and is Lipschitz continuous, (2) $\E_{\pitarget}[\|\nabla \log \pitarget (x)\|^2] < \infty$, (3) the density of $\pitarget$ is strictly positive w.r.t. Lebesgue measure, and (4) $\log \pitarget$ satisfies the dissipativity assumption of \citet{raginsky2017non}. 
\end{assumption}
We show in this section that for any $\varepsilon_{\rm score},\, \varepsilon_{\rm TV} > 0$, there exists a score estimate $\hat s$ with small $L^2$ error but such that Langevin Dynamics has large $\TV$ sampling error. Here $\hat s$ is a `worst-case' construction. Specifically, we establish:
\begin{theorem}\label{thm:general-pi}
For any $\varepsilon_{\rm score},\, \varepsilon_{\rm TV} > 0$, 
there exist a unit vector $u \in \mathbb{R}^d$ and $\theta \in (0, \pi / 2)$ such that the following holds. There exists a score estimate $\hat s$ that is piecewise Lipschitz such that we have:
\begin{enumerate}
    \item Small $L^2$ error:
\begin{align*}
    & \mathbb{E}_{\pitarget}[\|\hat s(x) - \nabla \log \pitarget(x)\|^2] \leq \varepsilon^2_{\rm score}.
\end{align*}
\item Far from $\pitarget$ in Total Variation as $t \rightarrow \infty$: for any initial distribution $\pi_0$, define the SDE
\begin{align*}
    \dd X_t = \hat s(X_t) \dd t + \sqrt{2} \dd B_t, \qquad X_0 \sim \pi_0,  
\end{align*}
and let $\pi_t$ be the distribution of $X_t$. Then
\begin{align*}
\liminf_{t \to \infty}\mathrm{TV}(\pitarget, \pi_t) \geq 1 - \varepsilon_{\mathrm{TV}}.
\end{align*}
\end{enumerate}
\end{theorem}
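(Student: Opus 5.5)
We construct $\hat s$ by modifying $\nabla \log \pitarget$ only inside a thin cone $\mathcal{C} = \{x : \langle x, u\rangle \ge \|x\|\cos\theta\}$ with axis $u$ and small half-angle $\theta$, restricted to $\|x\| \ge R$ for a large radius $R$. The region $\mathcal{C}_R := \mathcal{C}\cap\{\|x\|\ge R\}$ will have tiny mass $\pitarget(\mathcal{C}_R)\le\delta$, yet inside it $\hat s$ will be a strongly confining drift toward a far-away point. The modified dynamics then acquires a new stationary law that is essentially concentrated deep inside the cone, where $\pitarget$ has almost no mass.

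\textbf{Construction.} Choose $u$ arbitrary, $\theta$ small and $R$ large so that $\pitarget(\mathcal{C}_R)$ and $\E_{\pitarget}[\|\nabla\log\pitarget\|^2\mathbf 1_{\mathcal{C}_R}]$ are both at most $\delta$; this uses assumption (2) and dominated convergence. Pick a point $z = L u$ with $L \gg R$. On $\mathcal{C}_R$ set $\hat s(x) = -\beta(x - z)$; elsewhere set $\hat s = \nabla\log\pitarget$. Take $\beta$ large and $L$ so large that $B(z, r)\subset \mathcal{C}_R$ for some $r \gg \beta^{-1/2}$ (the cone is wide at distance $L$), while $\E_{\pitarget}[\|\beta(x-z)\|^2\mathbf 1_{\mathcal{C}_R}]$ stays small. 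For the latter, shrink $\theta$ after fixing $\beta, L$: the quantity is at most $2\beta^2 \E_{\pitarget}[(\|x\|^2+L^2)\mathbf 1_{\mathcal{C}_R}]$, which tends to $0$ as $\theta\to 0$ by integrability. If second moments are needed here, they follow from dissipativity (4), which gives sub-Gaussian tails. To avoid circularity in the choice of $\theta$ versus $L$, I would instead let the cone be fixed and confine the modification to $\mathcal{C}\cap B(z, r')$. The resulting $\hat s$ is piecewise Lipschitz by (1), and item 1 follows from $\|a-b\|^2\le 2\|a\|^2+2\|b\|^2$.

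\textbf{Asymptotic TV.} With $\hat s$ piecewise Lipschitz and equal to $\nabla \log \pitarget$ outside a bounded set, dissipativity of $\hat s$ at infinity still holds. By Lyapunov/Harris theory, the diffusion with drift $\hat s$ therefore has a unique invariant law $\hat\pi$, and $\pi_t\to\hat\pi$ in TV for every $\pi_0$. Positivity of the transition density gives irreducibility; the drift condition uses $V(x)=\|x\|^2$. It then suffices to show $\hat\pi(B(z,r))\ge 1-\varepsilon_{\rm TV}/2$, since $\pitarget(B(z,r))\le\delta$. Granting that, $\liminf_t \TV(\pitarget,\pi_t) = \TV(\pitarget,\hat\pi)\ge 1-\varepsilon_{\rm TV}/2-\delta$.

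\textbf{Main obstacle.} The hard step is bounding $\hat\pi(B(z,r))$, and I would use an occupation-time, or renewal, argument. Starting inside $B(z,r/2)$, the process agrees with an OU process of rate $\beta$ centered at $z$ until it exits $B(z,r)$. As in Lemma \ref{lem:OU_hitting_time_general}, when $\beta r^2 \gg d$ the expected exit time is at least $e^{c\beta r^2}$. Meanwhile, starting from anywhere, the expected time to re-enter $B(z,r/2)$ is bounded by some $M$. This follows from dissipativity: the process returns to a compact set in bounded expected time, and from there it hits the ball with probability bounded below within unit time. The ratio-of-cycle-lengths formula then gives
\[
\hat\pi(B(z,r)^C) \le \frac{M}{e^{c\beta r^2}},
\]
which is at most $\varepsilon_{\rm TV}/2$ once $\beta$ is chosen large. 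The order of the parameter choices is delicate: $M$ depends on $z$ and $r$ but not on $\beta$, provided the outer drift is fixed. So I fix $u,\theta,z,r$ first, then $\beta$, and finally check that the $L^2$ cost of the modification, which is localized to $B(z,r')$ where $\pitarget$ has tiny mass, stays small. This last check requires taking $z$ far enough out initially, using the tail decay of $\pitarget$.
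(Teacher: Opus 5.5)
You take a genuinely different route from the paper: a positive-recurrent Ornstein--Uhlenbeck well of strength $\beta$ around a far point $z$, Harris ergodicity to get $\pi_t\to\hat\pi$, and a Khasminskii cycle bound $\hat\pi(B(z,r)^C)\le M/e^{c\beta r^2}$. That skeleton is sound. However, the step where you reconcile the two requirements is circular, and this is exactly where the content of the theorem lies.

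The $L^2$ cost of your modification is of order $\beta^2 r'^2\,\pitarget(B(z,r'))+\E_{\pitarget}[\|\nabla\log\pitarget\|^2\mathbf{1}_{B(z,r')}]$. Meanwhile $\beta$ must satisfy $e^{c\beta r^2}\ge 2M/\varepsilon_{\rm TV}$. Your $M$, a uniform bound on the expected time to re-enter $B(z,r/2)$, necessarily tends to infinity as $\|z\|\to\infty$: the unmodified dynamics takes longer and longer to reach regions of vanishing $\pitarget$-mass. So ``taking $z$ far enough out initially'' to shrink $\pitarget(B(z,r'))$ also forces $\beta$ up. Nothing in the proposal shows that $\beta^2\,\pitarget(B(z,r'))\to 0$ along this coupled choice; your second ordering just relocates the circularity you noticed in the first. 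A smaller point: $M$ is not literally independent of $\beta$, because the drift on the annulus $r/2<\|x-z\|<r'$ is $-\beta(x-z)$. A bound uniform in $\beta\ge\beta_0$ needs a comparison argument, e.g.\ for the radial process, which is an autonomous one-dimensional diffusion inside $B(z,r')$.

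The missing ingredient is quantitative. Since $\beta$ only needs to be of order $r^{-2}\log(M(z)/\varepsilon_{\rm TV})$, it suffices to show $\log M(z)=O(\|z\|^2)$ while $\pitarget(B(z,r'))\le e^{-c\|z\|^2}$ for large $\|z\|$. The second bound is exactly what dissipativity gives. The first follows from dissipativity-driven returns to a fixed compact set, combined with a Girsanov (or support-theorem) lower bound of order $e^{-C(1+\|z\|)^2}$ on reaching $B(z,r/2)$ within unit time from that set. That lower bound uses that $\nabla\log\pitarget$ is Lipschitz, hence of linear growth, and that the inner drift points toward $z$. The $L^2$ cost is then polynomial in $\|z\|$ times a Gaussian tail and vanishes, which closes your argument: fix $r,r'$, then send $\|z\|\to\infty$ with $\beta=\beta(z)$.

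By contrast, the paper avoids any such balancing. Its argument puts the \emph{bounded} drift $u$ in a thin cone around $u$ (its $L^2$ estimate integrates over the cone, and Lemma~\ref{lemma:BM} is applied to the dynamics there). The $L^2$ cost is therefore at most $2\pitarget(\mathrm{Cone}_{\theta,u})+2\E_{\pitarget}[\|\nabla\log\pitarget\|^2\mathbf{1}_{\mathrm{Cone}_{\theta,u}}]$, small by thinness alone. The trap is transience: with positive probability, Brownian motion with drift $u$ never leaves the cone, and a strong-Markov renewal over visits to a ball $K'$ upgrades this to probability one. That needs no rates, but the mass escapes to infinity and $\pi_t$ has no limit. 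Your repaired route instead produces a genuine invariant law $\hat\pi$ far from $\pitarget$, closer in spirit to Theorems~\ref{thm:cone-tv} and~\ref{thm:dbi-tv}, at the price of the exit/return-time estimates above. In your final construction the cone plays no role.
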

Note the SDE given above has a unique strong solution by Zvonkin's Theorem \citep{zvonkin1974transformation} as $\hat s$ is piecewise Lipschitz. We prove Theorem \ref{thm:general-pi} in Appendix \ref{sec:asymptoticproofs}.

Theorem \ref{thm:general-pi} establishes a lower bound against the efficacy of Langevin dynamics run with a score estimate with small $L^2$ error for a broad class of target distributions $\pitarget$ and \textit{any} initialization, that holds in the limit $t \rightarrow \infty$.

\section{Simulations}\label{sec:simulation}
We validate the practical implication of Theorem \ref{thm:dbi-tv} via simulation. We emphasize that the simulations here are intended only to provide support for Theorem \ref{thm:dbi-tv}; an additional simulation supporting Theorem \ref{thm:cone-tv} is provided at the end of this Section.
Specifically, we consider two target distributions $\pitarget$: a single Gaussian $N(\ones, 2I_d)$ with $d=50$ and a mixture of Gaussians (GMM) $\frac12 N(-\ones, 2I_d) + \frac12 N(4\ones, 2I_d)$ with $d=25$.
Full details and results are in Section \ref{sec:appendix_simulations}.

\textbf{Learning the score:} We learn an estimate $\hat s$ of the score function $\grad \log \pitarget$ with a fully connected neural network with 3 hidden layers trained with Adam \citep{kingma2015adam} for 150,000 epochs.
To encourage an overfit or `memorized' score estimate $\hat s$, we create a training set of 10,000 samples by drawing 1000 i.i.d. samples from $\pitarget$ and duplicating each sample 10 times. 
We learn the score by score matching; we learn the denoiser at the 100 lowest noise levels out of a linear noise schedule of 1000 levels (e.g. \citet{ho2020denoising}). We intentionally used a limited dataset of 1000 repeated points to encourage overfitting of the score function to these training points, and therefore demonstrate the phenomenon predicted by Theorem \ref{thm:dbi-tv}. 

Here, we used the DDPM objective with low noise levels because it yielded the most accurate and stable approximation of $\grad \log \pitarget$. Using a single, very low noise level proved difficult to train reliably. While methods such as implicit score matching or sliced score matching could in principle be used, they are less practical even at small scale. We chose DDPM with low noise levels because it is fast, easy to implement, and sufficient to illustrate the phenomenon highlighted by Theorem \ref{thm:dbi-tv}.
During training, we added noise `dynamically': for each batch, new noise was sampled and added to the base samples from $\pitarget$, following the standard DDPM procedure.

\textbf{Sampling algorithms:} We run fixed-step-size Langevin dynamics for 1000 iterations, evaluating the denoiser at the 10th lowest noise level out of the noise schedule to approximate $\grad \log \pitarget$ at each iteration. 
We use 1000 iterations to simulate a $\poly(d)$ timescale, and as score-based generative models rarely use more than 1000 denoiser evaluations in practice \citep{karras2022elucidating}.
We initialize Langevin dynamics at $n$ points to produce $n \in \{1500, 7500, 13500, 16250\}$ samples the following three ways, yielding 3 different algorithms. 1) initialize $n$ draws from $N(0, I_d)$, 2) take 30 fresh i.i.d. draws from $\pitarget$ each duplicated $n/30$ times, and 3) take 30 random draws from the 1000 distinct training samples each duplicated $n/30$ times. Here the 30 samples represent a random subset of the training samples or fresh samples; we repeat $n/30$ times to produce $n$ total samples.
This whole procedure is done for 10 trials, and values are averaged over the 10 trials.

\begin{figure}
    \centering
    \includegraphics[width=0.45\textwidth]{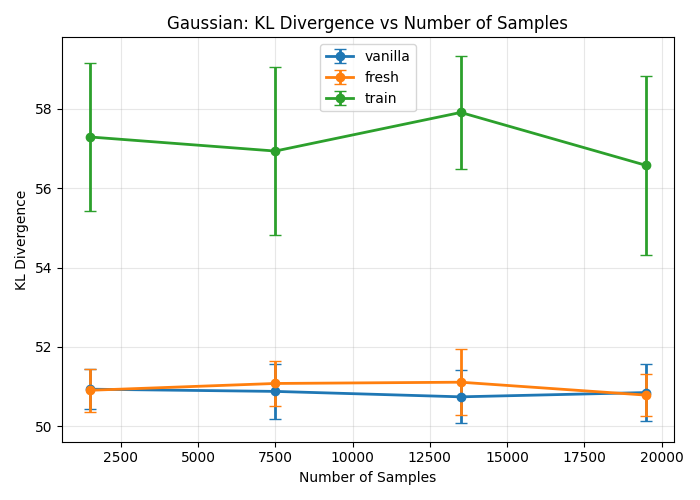}
    \caption{KL of produced samples vs Gaussian $\pitarget$. KL is estimated by fitting a Gaussian $\hat\pi$ to the produced samples and analytically computing $\KL\big( \pitarget, \hat\pi \big)$.}
    \label{fig:plot_gaussian}
\end{figure}

\begin{figure}
    \centering
    \includegraphics[width=0.45\textwidth]{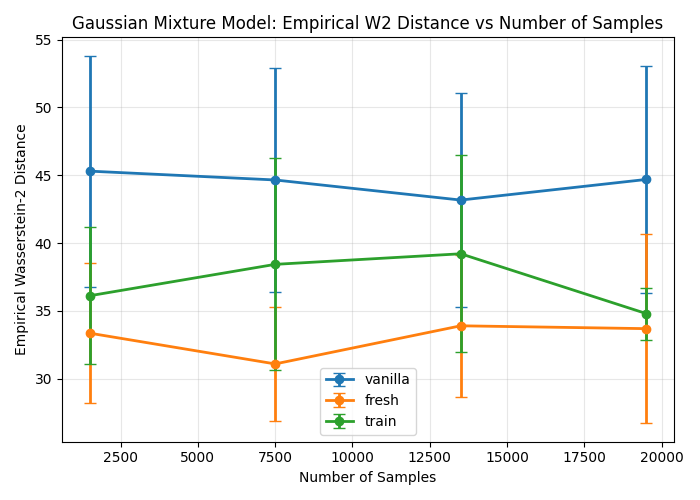}
    \caption{Empirical Wasserstein distance of produced samples vs GMM $\pitarget$. Wasserstein distance was approximated via the Sinkhorn loss \citep{cuturi2013sinkhornloss}.}
    \label{fig:plot_gmm}
\end{figure}

\label{fig:gaussian}

\textbf{Takeaway:} We plot our results in Figure \ref{fig:plot_gaussian} for Gaussian $\pitarget$ and Figure \ref{fig:plot_gmm} for GMM $\pitarget$. 
Algorithms 1, 2, 3 are denoted `vanilla', 'fresh', 'train' respectively.
As predicted by Theorem \ref{thm:dbi-tv}, initializing from samples drawn from the training set (Algorithm 3) generally produces worse results than when the samples are fresh (Algorithm 2).
For Gaussian $\pitarget$, the gap between Algorithms 2 and 3 is significant, and Algorithm 1 performs similarly to Algorithm 2.
For GMM $\pitarget$, the gap between Algorithms 2 and 3 is smaller, although Algorithm 3 still generally does worse than Algorithm 2, and Algorithm 1 performs the worst of the three methods due to the poor spectral gap of the GMM.

Note our results are consistent with those of \cite{koehler2023sampling}, \cite{koehler2024efficiently} as Algorithm 3 is initialized at the exact same training samples used to learn $\hat s$. In contrast, the sampling results of \cite{koehler2023sampling}, \cite{koehler2024efficiently} apply when initializing Langevin Dynamics at \textit{fresh} samples.

Finally, we do not claim that the simulations verify every assumption of Theorem \ref{thm:dbi-tv} exactly; the goal of these simulations is not a literal instantiation of the Theorem’s assumptions. Rather, the simulations are designed to test its qualitative prediction in synthetic settings: when the score estimate is overfit to the training samples, Langevin dynamics can fail when initialized from these same training samples. 

\textbf{Simulations in support of Theorem \ref{thm:cone-tv}}
We run Langevin dynamics for 1000 iterations with the analytic form of $\hat s$ from Theorem \ref{thm:cone-tv}, and with the true score $s$. This is done for $\pitarget = N(\mu, I_d)$ where $\|\mu\| = k\sqrt{d}$, $k \in \{1, \ldots, 7\}$, $d=50$, and setting $\alpha=25$ in the construction of $\hat s$ in Theorem \ref{thm:cone-tv}. 
We perform this for 10 trials and plot the mean $L_2$ score error of $\hat s$, and the mean KL divergence of the samples produced by running Langevin with $\hat s$ and with $s$. See Table \ref{tab:trap-sweep}. 

The claimed effect in Theorem \ref{thm:cone-tv} takes places when $\|\mu\| = k\sqrt{d}$ for $k = 6, 7$, where $L^2$ score error of $\hat s$ is extremely small (it is 0 to all the significant digits), but the KL divergence of the samples produced by running Langevin with $\hat s$ is much larger than when running Langevin with $s$. Note when $k=6,7$, the ball of radius $\sqrt{d}$ centered at $\mu$ is completely contained in $\{x:\|x\|\ge 5\sqrt{d}\}$, corresponding to the setting of Theorem \ref{thm:cone-tv}.

\begin{table}[t]
\caption{Simulating the setting of Theorem~\ref{thm:cone-tv} with target
$\pitarget=N(\mu,I_d)$, $d=50$, and $\alpha=25$.
}
\label{tab:trap-sweep}
\vskip 0.15in
\begin{center}
\begin{small}
\begin{tabular}{cccc}
\toprule
$\|\mu\|$ & $L_2$ score error & KL with $\hat{s}$ & KL with true score $s$ \\
\midrule
 7.07 & $2.45\times10^{2}$ & $1.13\times10^{3}$ & 0.308 \\
14.1  & $3.92\times10^{2}$ & $2.98\times10^{3}$ & 0.815 \\
21.2  & $5.56\times10^{2}$ & $6.07\times10^{3}$ & 1.66  \\
28.3  & $7.00\times10^{2}$ & $1.04\times10^{4}$ & 2.84  \\
35.4  & $9.22$             & $1.60\times10^{4}$ & 4.35  \\
42.4  & $0$                & $2.28\times10^{4}$ & 6.20  \\
49.5  & $0$                & $3.08\times10^{4}$ & 8.39  \\
\bottomrule
\end{tabular}
\end{small}
\end{center}
\vskip -0.1in
\end{table}

\section{Conclusion}\label{sec:conclusion}
In this paper, we established that Langevin dynamics is not robust to $L^p$ errors in the estimate of the score function in high dimensions, even when the $L^p$ score estimation error is exponentially small. 
Our lower bounds apply to natural examples: simple isotropic Gaussian target distributions, Lipschitz score estimates, and natural initializations, namely from $N(0, I_d)$ and from i.i.d. samples used to learn the score.
As such, our work cautions against the use of Langevin dynamics with an estimated score.
That being said, we emphasize that our constructions from Theorems \ref{thm:cone-tv}, \ref{thm:general-pi} are adversarial constructions that demonstrate $L^p$ score estimation error is by and itself insufficient. 
Understanding whether such constructions can arise when learning the score in practice is an interesting future direction.

\newpage

\bibliographystyle{plainnat}
\bibpunct{(}{)}{;}{a}{,}{,}

\newpage
\section{Technical Preliminaries}
\paragraph{Additional Notation: }We let $\cS^{d-1}$ denote the unit sphere in $\R^d$.

We first introduce several key technical preliminaries. The first two results we present are on concentration of measure in high dimensions.
\begin{lemma}[Gaussian concentration of norm; see e.g. Theorem 3.1.1 and (3.7), \citet{vershynin2018high}]\label{lem:gaussianconcentration}
Let $X\sim N(0,I_d)$. There exist absolute constants $c,C>0$ such that for all $t>0$,
\[
\PP\Big( \big|\|X\|-\sqrt d\big| \ge t\sqrt d\Big) \le C e^{-c t^2 d}.
\]
\end{lemma}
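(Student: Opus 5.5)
The plan is to derive the bound from Gaussian concentration for Lipschitz functions, which is the route taken in the cited reference up to constants. The map $x \mapsto \|x\|$ is $1$-Lipschitz on $\R^d$. By the Gaussian concentration inequality (e.g.\ via the Gaussian log-Sobolev inequality and Herbst's argument, or the Tsirelson--Ibragimov--Sudakov inequality), for every $s>0$
\[
\PP\big( \big|\|X\| - \E\|X\|\big| \ge s \big) \le 2 e^{-s^2/2}.
\]
It then remains to replace $\E\|X\|$ by $\sqrt d$ at a cost of an absolute constant.

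\textbf{Locating the mean.} Jensen's inequality gives $\E\|X\| \le (\E\|X\|^2)^{1/2} = \sqrt d$. For the reverse direction, the concentration inequality above bounds the variance: $\mathrm{Var}(\|X\|) \le C_0$ for an absolute constant $C_0$, obtained by integrating the tail (the Poincar\'e inequality gives $\mathrm{Var}(\|X\|) \le 1$ directly). Therefore
\[
(\E\|X\|)^2 = d - \mathrm{Var}(\|X\|) \ge d - C_0, \qquad \text{so} \qquad \sqrt d - \E\|X\| \le \frac{C_0}{\sqrt d + \E\|X\|} \le C_1,
\]
where the last step holds once $\E\|X\|$ is not tiny. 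If instead $\E\|X\|$ is tiny, $d$ is at most a constant and the difference is trivially bounded. In all cases $\big|\E\|X\| - \sqrt d\big| \le C_1$ for an absolute constant $C_1$.

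\textbf{Combining.} Consider the two regimes of $t\sqrt d$.
\begin{itemize}
\item If $t\sqrt d \ge 2C_1$, then $\big|\|X\|-\sqrt d\big| \ge t\sqrt d$ forces $\big|\|X\| - \E\|X\|\big| \ge t\sqrt d - C_1 \ge t\sqrt d/2$. The probability is therefore at most $2e^{-t^2 d/8}$.
\item If $t\sqrt d < 2C_1$, then $e^{-c t^2 d} \ge e^{-4cC_1^2}$. Choosing $C \ge e^{4cC_1^2}$ makes the right-hand side at least $1$, so the bound holds trivially.
\end{itemize}
Taking $c = 1/8$ and $C = \max\{2, e^{C_1^2/2}\}$ covers both regimes.

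An alternative route, which is Vershynin's, is to note that $\|X\|^2 - d$ is a sum of independent centered sub-exponential variables and apply Bernstein's inequality. One then uses that $|z-1| \ge \delta$ implies $|z^2-1| \ge \max(\delta,\delta^2)$ for $z \ge 0$. With $z = \|X\|/\sqrt d$, this yields the tail $2\exp(-c\, d\min(\delta^2,\delta^2)) = 2e^{-c\delta^2 d}$.

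The only mildly delicate point is the bound on $|\E\|X\| - \sqrt d|$, since the deviation must be measured from $\sqrt d$ rather than from the mean. Once that mean–variance estimate is in place, the remaining steps are bookkeeping of absolute constants.
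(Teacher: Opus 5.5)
Your proof is correct, but its main line differs from the argument behind the paper's citation. The paper gives no proof of its own and defers to Vershynin's Theorem 3.1.1. That theorem is exactly what you list as your alternative: Bernstein's inequality for the sub-exponential sum $\|X\|^2-d=\sum_i(X_i^2-1)$, followed by the implication $|z-1|\ge\delta\Rightarrow|z^2-1|\ge\max(\delta,\delta^2)$ to pass from the squared norm to the norm. In that route the exponent should be $c\,d\min(u^2,u)$ with $u=\max(\delta,\delta^2)$, which equals $c\,d\,\delta^2$ both when $\delta\le1$ and when $\delta>1$; your ``$\min(\delta^2,\delta^2)$'' is a garbled form of this.

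Your primary argument instead concentrates the $1$-Lipschitz map $x\mapsto\|x\|$ about its mean and then locates the mean. The mean-location step is simpler than you make it. Since $\sqrt d+\E\|X\|\ge\sqrt d\ge1$, the Gaussian Poincar\'e inequality gives $0\le\sqrt d-\E\|X\|=\mathrm{Var}(\|X\|)/(\sqrt d+\E\|X\|)\le 1/\sqrt d$. No case split on whether $\E\|X\|$ is tiny is needed, so $C_1=1$ and your bookkeeping yields $c=1/8$, $C=2$.

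Your route shows that, within the paper, Lemma~\ref{lem:gaussianconcentration} is a corollary of Lemma~\ref{lem:lipschitzfunctionsconcentration}; the variance bound also follows from that lemma by integrating the tail. It also gives explicit constants. Because it uses only a log-Sobolev/Poincar\'e inequality, it carries over to the strongly log-concave setting the paper invokes in Remarks~\ref{rem:generalize_target_1} and~\ref{rem:generalize_target_2}, provided you center at $\E\|X\|$ rather than at $\sqrt d$. Vershynin's route uses no Gaussian isoperimetry or rotation invariance, only independent unit-variance sub-gaussian coordinates. It also needs no separate mean estimate, because it works with $\|X\|^2$, whose mean is exactly $d$.
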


\begin{lemma}[Gaussian concentration of Lipschitz functions; see e.g. (5.4.2), \citet{bakry2014analysis}]\label{lem:lipschitzfunctionsconcentration}
Suppose $f$ is a $L$-Lipschitz function. Then there exists an absolute constant $C>0$ such that for all $t>0$,
\begin{align*}
{\PP}_{X \sim N(0, I_d)}\Big( \big| f(X) - {\E}_{X \sim N(0, I_d)}[f(X)] \big| \ge t \Big) \le 2e^{-\frac{t^2}{2CL^2}}.
\end{align*}
\end{lemma}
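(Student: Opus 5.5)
The plan is the classical Herbst argument built on the Gaussian logarithmic Sobolev inequality. With the normalization we use, that inequality reads $\mathrm{Ent}_{\gamma}(g^2) \le 2\,\E_{\gamma}[\|\grad g\|^2]$ for $\gamma = N(0,I_d)$ and smooth $g$. It is dimension-free, which is exactly why the resulting tail bound has no dependence on $d$. We will take it as known, since it is the $\alpha = 1$ case of the strongly-log-concave log-Sobolev inequality recalled in Section \ref{sec:background}.

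First we reduce to smooth $f$. Replace $f$ by the mollification $f_\varepsilon = f * \rho_\varepsilon$, where $\rho_\varepsilon$ is a smooth compactly supported approximate identity. Then $f_\varepsilon$ is smooth and $L$-Lipschitz, so $\|\grad f_\varepsilon\| \le L$ everywhere, and $f_\varepsilon \to f$ uniformly. Both $\E_\gamma[f_\varepsilon]$ and the tail probabilities converge (tails converge at continuity points, then for all $t$ by monotonicity). So it suffices to prove the bound for smooth $f$ with $\|\grad f\|\le L$. By replacing $f$ with $f - \E_\gamma f$, we may also assume $\E_\gamma f = 0$. Integrability of $e^{\lambda f}$ is clear because $|f(x)| \le |f(0)| + L\|x\|$.

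Next comes the Herbst step. Fix $\lambda > 0$, set $H(\lambda) = \E_\gamma[e^{\lambda f}]$, and apply the log-Sobolev inequality to $g = e^{\lambda f/2}$. Since $\|\grad g\|^2 = \tfrac{\lambda^2}{4}\|\grad f\|^2 e^{\lambda f} \le \tfrac{\lambda^2L^2}{4}e^{\lambda f}$, we get
\begin{align*}
\lambda H'(\lambda) - H(\lambda)\log H(\lambda) \le \frac{\lambda^2 L^2}{2} H(\lambda).
\end{align*}
Here differentiation under the expectation is justified by the integrability noted above. Dividing by $\lambda^2 H(\lambda)$ shows that $K(\lambda) = \lambda^{-1}\log H(\lambda)$ satisfies $K'(\lambda) \le L^2/2$. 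Moreover $K(\lambda) \to H'(0)/H(0) = \E_\gamma f = 0$ as $\lambda \downarrow 0$. Integrating gives $\log \E_\gamma[e^{\lambda f}] \le \lambda^2 L^2/2$ for all $\lambda > 0$.

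Finally, Chernoff's bound gives $\PP(f \ge t) \le e^{-\lambda t + \lambda^2L^2/2}$. Optimizing at $\lambda = t/L^2$ yields $\PP(f \ge t)\le e^{-t^2/(2L^2)}$. Applying the same argument to $-f$ and taking a union bound gives the two-sided statement with factor $2$, and $C = 1$ already suffices.

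The main obstacle is not the calculus. It is the input inequality itself, together with the regularity bookkeeping: differentiating $H$ and taking the limit $\lambda\downarrow 0$. If we wanted to avoid assuming the log-Sobolev inequality, the fallback would be the Maurey--Pisier interpolation argument. There one writes $f(X)-f(Y)=\int_0^{\pi/2}\tfrac{\dd}{\dd\theta}f(X\sin\theta+Y\cos\theta)\,\dd\theta$ with $Y$ an independent copy of $X$, and uses Jensen's inequality together with rotation invariance of $(X,Y)$. This gives the same bound with the worse constant $C = \pi^2/4$, which is still an absolute constant as the lemma requires.
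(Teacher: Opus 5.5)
Your argument is correct. It is the standard Herbst argument from the dimension-free Gaussian log-Sobolev inequality, which is how the cited source (Bakry--Gentil--Ledoux, (5.4.2)) obtains the bound; the paper gives no proof of its own beyond that citation, and your argument yields the sharp constant $C=1$.
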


Solely for the proof of Lemma \ref{lem:OU_initial_lemma}, we will need the following more involved results on concentration of measure to control a suitable stochastic process indexed by a continuous time index. The first result is Dudley's chaining bound, used to control the expectation of a relevant supremum.
\begin{theorem}[Dudley Chaining; see e.g. Theorem 8.1.3 and Remark 8.1.9, \citet{vershynin2018high}]\label{thm:dudleychaining}
Let $(X_t)_{t \in T}$ be a mean 0 Gaussian process on a set $T$. Define a metric on $T$ by $d(t_1,t_2) = \E\big[ ( X_{t_1} - X_{t_2} )^2 \big]^{1/2}$. Then
\begin{align*}
\E\big[ \sup_{t \in T} X_t \big] \le C \int_{0}^{\text{diam}(T)} \sqrt{\log \cN(T, d, \eps)} \dd \eps,
\end{align*}
where $\text{diam}(T)$ denotes the diameter of $T$ w.r.t. $d(\cdot,\cdot)$, where $\cN(T, d, \eps)$ denotes the covering number of $T$ to scale $\eps$ with respect to $d(\cdot,\cdot)$, and where $C>0$ is an absolute constant.
\end{theorem}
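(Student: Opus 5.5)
Since the statement is the classical Dudley entropy bound, I would give the standard chaining argument, reducing first to finite index sets. For a general (possibly uncountable) $T$, interpret $\E[\sup_{t\in T}X_t]$ as $\sup_{F\subseteq T,\,F \text{ finite}}\E[\max_{t\in F}X_t]$, which is the usual convention and agrees with the pathwise supremum for separable processes such as the continuous-time processes used later. Covering numbers are monotone under passing to subsets up to a factor of $2$ in the scale: $\cN(F,d,\eps)\le\cN(T,d,\eps/2)$. After the change of variables $\eps\mapsto 2\eps$ this only changes the absolute constant $C$. Hence it suffices to prove the bound for finite $T$ with $\text{diam}(T)=D<\infty$.

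Fix a finite $T$ and the dyadic scales $\eps_k=D2^{-k}$ for $k\ge 0$. Let $T_k\subseteq T$ be an $\eps_k$-net of cardinality $N_k:=\cN(T,d,\eps_k)$. Take $T_0=\{t_0\}$, which is possible since $\eps_0=D$. Let $\pi_k(t)$ be a closest point of $T_k$ to $t$. Because $T$ is finite, there is a $K$ with $\eps_K$ below the minimal positive distance in $T$, so $\pi_K(t)=t$. Since the process is mean zero, $\E\sup_t X_t=\E\sup_t (X_t-X_{t_0})$. We then telescope:
\begin{align*}
X_t-X_{t_0}=\sum_{k=1}^{K}\big(X_{\pi_k(t)}-X_{\pi_{k-1}(t)}\big),
\end{align*}
which gives $\E\sup_t(X_t-X_{t_0})\le\sum_{k=1}^K\E\max_t\big(X_{\pi_k(t)}-X_{\pi_{k-1}(t)}\big)$.

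For each $k$ the increment $X_{\pi_k(t)}-X_{\pi_{k-1}(t)}$ is a centered Gaussian. Its standard deviation is $d(\pi_k(t),\pi_{k-1}(t))\le\eps_k+\eps_{k-1}=3\eps_k$. As $t$ ranges over $T$, it takes at most $N_kN_{k-1}\le N_k^2$ distinct values. The elementary maximal inequality $\E\max_{i\le M}g_i\le\sigma\sqrt{2\log M}$ for centered Gaussians with variance at most $\sigma^2$ follows from Jensen applied to $\exp(\lambda\max_i g_i)$ and optimizing over $\lambda$. It bounds the $k$-th term by $3\eps_k\sqrt{4\log N_k}$, so the total is $\lesssim\sum_{k\ge1}\eps_k\sqrt{\log N_k}$.

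Finally, compare this sum with the integral. Since $\eps\mapsto\cN(T,d,\eps)$ is nonincreasing, for $\eps\in[\eps_{k+1},\eps_k]$ we have $\sqrt{\log\cN(T,d,\eps)}\ge\sqrt{\log N_k}$. Therefore
\begin{align*}
\eps_k\sqrt{\log N_k}=2(\eps_k-\eps_{k+1})\sqrt{\log N_k}\le 2\int_{\eps_{k+1}}^{\eps_k}\sqrt{\log\cN(T,d,\eps)}\,\dd\eps.
\end{align*}
Summing over $k\ge1$ gives at most $2\int_0^{D}\sqrt{\log\cN(T,d,\eps)}\,\dd\eps$, which proves the claim with an absolute constant $C$.

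The only genuinely delicate point is the reduction from arbitrary $T$ to finite $T$ (measurability of the supremum and the factor-$2$ loss in covering numbers). The chaining estimates themselves are routine.
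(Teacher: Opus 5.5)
Your proof is correct and is the standard chaining argument behind the cited result: dyadic nets, telescoping, the Gaussian maximal inequality, and the sum-to-integral comparison, with the upper limit truncated at $\text{diam}(T)$ as in Vershynin's Remark 8.1.9. The paper gives no proof of its own and simply imports this theorem. One cosmetic point: $d$ is only a pseudometric, so $\pi_K(t)=t$ should read $X_{\pi_K(t)}=X_t$ almost surely, which is all the telescoping identity needs.
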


We will also use the Borell-TIS (Tsirelson–Ibragimov–Sudakov) Inequality to obtain concentration of a relevant supremum.
\begin{theorem}[Borell-TIS Inequality; see e.g. Theorem 2.1.1, \citet{adler2007random}]\label{thm:borellTIS}
Let $f_t$ be a centered Gaussian process almost surely bounded on $T$. Then $\E\big[ \sup_{t \in T} f_t \big] < \infty$ and letting $\sigma_T^2 := \sup_{t \in T} \E\big[ f_t^2 \big]$, we have 
\begin{align*}
\PP\Big( \sup_{t \in T} f_t - \E\big[ \sup_{t \in T} f_t \big] > u \Big) \le e^{-\frac{u^2}{2\sigma_T^2}}.
\end{align*}
\end{theorem}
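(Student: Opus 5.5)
The plan is the classical route: reduce to a finite index set, recognize the supremum as a Lipschitz function of a standard Gaussian vector, and apply sharp Gaussian concentration for Lipschitz functions.

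\textbf{Step 1 (finite $T$).} Suppose first that $T=\{t_1,\dots,t_m\}$ is finite. The centered Gaussian vector $(f_{t_1},\dots,f_{t_m})$ can be written as $Ag$, where $g\sim N(0,I_k)$ and $A$ is a matrix with rows $a_i$ satisfying $AA^\top = \mathrm{Cov}(f)$, so that $\|a_i\|^2=\E[f_{t_i}^2]\le\sigma_T^2$. Define $F(g)=\max_i \langle a_i,g\rangle$. Since a maximum of $\|a_i\|$-Lipschitz linear maps is Lipschitz with the largest of those constants, $F$ is $\sigma_T$-Lipschitz. Clearly $\E|F(g)|<\infty$ in this case.

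\textbf{Step 2 (sharp concentration).} Lemma \ref{lem:lipschitzfunctionsconcentration} is not quite enough, because it has an unspecified constant $C$ and the statement requires exactly $e^{-u^2/(2\sigma_T^2)}$. I would instead invoke the sharp one-sided Gaussian concentration
\[
\PP\bigl(F-\E F>u\bigr)\le e^{-u^2/(2L^2)}.
\]
This can be proved by the Herbst argument applied to the Gaussian log-Sobolev inequality with constant $1$: it gives $\E e^{\lambda(F-\E F)}\le e^{\lambda^2L^2/2}$, and a Chernoff bound optimized over $\lambda$ yields the tail. This is the main technical input, and the place where care is needed to get the constant $1/2$. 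If one prefers to avoid log-Sobolev, the Maurey--Pisier interpolation $g\cos\theta+g'\sin\theta$ gives the same moment generating function bound. With $L=\sigma_T$, this proves the inequality for finite $T$.

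\textbf{Step 3 (general $T$ and finiteness of the mean).} I would take $T$ separable, which is implicit in "almost surely bounded", so that $\sup_T f_t=\lim_n \sup_{T_n} f_t$ for finite sets $T_n$ increasing to a countable dense subset.

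To show $\E\sup_T f_t<\infty$, note that since $\sup_T f_t<\infty$ a.s., there is a level $M$ with $\PP(\sup_{T_n} f\le M)\ge 3/4$ for all $n$. Applying the lower-tail analogue of Step 2 (which holds for $-F$ as well) with $u$ chosen so that $e^{-u^2/(2\sigma_T^2)}<1/4$ shows $\E\sup_{T_n}f\le M+\sigma_T\sqrt{2\log 4}$ uniformly in $n$. Monotone convergence then gives $\E\sup_T f<\infty$. One also needs $\sigma_T<\infty$, which follows because each $f_t$ is a Gaussian bounded in probability uniformly over $t$.

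Finally, the events $\{\sup_{T_n}f-\E\sup_{T_n}f>u\}$ converge to the target event, and $\sigma_{T_n}\le\sigma_T$. Passing to the limit (for instance, first at level $u-\eta$ and then letting $\eta\downarrow0$) gives the stated bound.
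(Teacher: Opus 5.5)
You should know first that the paper gives no proof of this statement: it quotes Adler--Taylor and uses the result as a black box. The relevant comparison is therefore with the standard proof.

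Your outline is correct and is essentially that proof:
\begin{itemize}
\item reduce to finite $T$;
\item write $\sup_T f$ as the $\sigma_T$-Lipschitz function $F(g)=\max_i\langle a_i,g\rangle$ of a standard Gaussian vector;
\item prove $\E e^{\lambda(F-\E F)}\le e^{\lambda^2\sigma_T^2/2}$;
\item pass to general $T$ by monotone convergence.
\end{itemize}
You also handle both finiteness issues correctly: $\sigma_T<\infty$ follows from $\PP(f_t\le M)\ge 3/4$ for every $t$, and $\E\sup_T f_t<\infty$ follows from the lower tail.

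The one methodological difference is how you get the sharp moment generating function bound. You use Herbst's argument on the Gaussian log-Sobolev inequality. The cited source instead uses a Gaussian interpolation identity,
$\mathrm{Cov}(F(g),G(g))=\int_0^1\E\langle\nabla F(g),\nabla G(\alpha g+\sqrt{1-\alpha^2}\,g')\rangle\,\dd\alpha$, with $G=e^{\lambda F}$. This gives $H'(\lambda)\le\lambda L^2H(\lambda)$ for $H(\lambda)=\E e^{\lambda(F-\E F)}$ using nothing beyond Gaussian integration by parts. Both routes deliver the constant $1/2$.

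Your parenthetical fallback is wrong as stated. The Maurey--Pisier argument integrates $F$ along $g\sin\theta+g'\cos\theta$ and applies Jensen. It only yields $\E e^{\lambda(F-\E F)}\le e^{\pi^2\lambda^2L^2/8}$, i.e.\ the tail $e^{-2u^2/(\pi^2L^2)}$, which misses the stated exponent. The same family of interpolants gives the sharp bound only when used through the covariance identity above.

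Two minor points of precision.
\begin{itemize}
\item Step 3 needs separability of the process, not of $T$, so that $\sup_T f_t$ equals a supremum over a countable dense set almost surely. In the paper's application this comes from continuity of $(t,v)\mapsto\bar Z_{t,v}$.
\item Let $S_n=\sup_{T_n}f$ and $m_n=\E S_n$. The events $\{S_n-m_n>u\}$ need not converge on $\{S-m=u\}$, so ``the events converge'' is not quite right. The clean statement is $\{S-m>u\}\subseteq\liminf_n\{S_n-m_n>u\}$, after which Fatou gives the bound directly at level $u$. Your $u-\eta$ device works but is unnecessary.
\end{itemize}
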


Finally, we will use the following Lemma to construct the bump function $\chi(\cdot)$, which allows us to take $\hat s$ Lipschitz in Theorems \ref{thm:cone-tv} and \ref{thm:dbi-tv}.
\begin{lemma}[Construction of bump function; Lemma 47, \citet{chen2024optimization}]\label{lem:mollifier_def_properties}
We can construct a bump function $\chi(\cdot) \in [0,1]$ such that:
\begin{itemize}
    \item $\chi \equiv 0$ in $\{x:\|x\| \le 4\}$, $\chi \equiv 1$ in $\{x:\|x\| \ge 5\}$.
    \item $\chi$ is differentiable to all orders.
    \item $\|\grad \chi\|, \|\grad^2 \chi\|_{\text{op}} \le H$ for a universal constant $H>0$.
\end{itemize}
\end{lemma}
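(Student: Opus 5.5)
The plan is to build $\chi$ as a radial function $\chi(x) = g(\|x\|)$, where $g:\R \to [0,1]$ is a smooth one-dimensional step function. Radiality is what makes all the bounds dimension-free.

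\textbf{The one-dimensional step.} Let $h(s) = e^{-1/s}$ for $s>0$ and $h(s)=0$ for $s \le 0$; this $h$ is $C^\infty$ on $\R$. Set
\begin{align*}
g(r) = \frac{h(r-4)}{h(r-4) + h(5-r)}.
\end{align*}
The denominator never vanishes, since for every $r$ at least one of $r-4$ and $5-r$ is positive. Hence $g$ is $C^\infty$ and takes values in $[0,1]$. Moreover $g \equiv 0$ for $r \le 4$ and $g \equiv 1$ for $r \ge 5$. So $g'$ and $g''$ are continuous and vanish outside $[4,5]$, which gives finite constants $H_1 := \sup_r |g'(r)|$ and $H_2 := \sup_r |g''(r)|$ depending on nothing.

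\textbf{Regularity and the two plateaus.} Define $\chi(x) = g(\|x\|)$. Then $\chi \in [0,1]$, $\chi \equiv 0$ on $\{\|x\|\le 4\}$, and $\chi \equiv 1$ on $\{\|x\| \ge 5\}$. For smoothness, the map $x \mapsto \|x\|$ is $C^\infty$ on $\R^d\setminus\{0\}$, so $\chi$ is $C^\infty$ there as a composition. Near the origin $\chi$ is identically $0$ on the open ball of radius $4$, so it is $C^\infty$ there as well.

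\textbf{Derivative bounds.} Write $r = \|x\|$ and $u = x/r$. A direct computation gives
\begin{align*}
\grad \chi(x) = g'(r)\, u, \qquad \grad^2 \chi(x) = g''(r)\, u u^\top + \frac{g'(r)}{r}\,\big(I_d - u u^\top\big).
\end{align*}
The gradient bound is immediate: $\|\grad\chi\| \le H_1$.

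For the Hessian, the only place a dimension- or singularity-dependent constant could enter is the factor $g'(r)/r$. This is harmless because $g'(r)=0$ whenever $r<4$, so on the support of $g'$ we have $r \ge 4$. Since $uu^\top$ and $I_d-uu^\top$ are orthogonal projections onto complementary subspaces, the operator norm of the Hessian is the maximum of the two coefficients in absolute value. Therefore
\begin{align*}
\|\grad^2\chi\|_{\mathrm{op}} \le \max\big(|g''(r)|,\ |g'(r)|/r\big) \le \max(H_2, H_1/4).
\end{align*}
Taking $H := \max(H_1, H_2)$ gives all the claimed properties, with $H$ independent of $d$. No real obstacle arises; the only points needing care are smoothness at the origin and the $1/r$ term, and both are handled by the fact that $\chi$ is constant on the ball of radius $4$.
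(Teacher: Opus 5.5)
Your proof is correct. The step $g$ built from $e^{-1/s}$ takes values in $[0,1]$ and has the right plateaus. The function $\chi = g(\|x\|)$ is $C^\infty$: it is smooth away from the origin and identically zero on the open ball of radius $4$. Your Hessian formula $g''(r)\,uu^\top + \frac{g'(r)}{r}(I_d - uu^\top)$ gives $\|\nabla^2\chi\|_{\mathrm{op}} = \max(|g''(r)|, |g'(r)|/r) \le \max(H_2, H_1/4)$, because $g'$ is supported in $[4,5]$.

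The paper gives no argument of its own for this lemma; it only cites Lemma 47 of \citet{chen2024optimization}. Your radial construction is therefore a self-contained substitute for that citation rather than a variant of an in-paper proof. Its main merit is that it makes the dimension-independence of $H$ transparent. The paper relies on exactly this when it uses $\psi(x/\sqrt{d})$ to get a $d$-independent Lipschitz constant for $\hat s$.

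The radial profile is not merely cosmetic. Mollifying the indicator of a ball by convolution with a standard mollifier on $\R^d$ would give a gradient bound that grows with $d$.
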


\subsection{Proof of Lemma \ref{lem:OU_hitting_time_general}}\label{subsec:pf_key_technical_appendix}
We will need the following key technical Lemma to prove Theorems \ref{thm:cone-tv} and \ref{thm:dbi-tv}.
\begin{lemma}\label{lem:OU_hitting_time_general}
There exists a universal constant $c_{\ref{lem:OU_hitting_time_general}}>0$ such that the following holds. Consider any $d \ge K_{\ref{lem:OU_hitting_time_general}}$ where $K_{\ref{lem:OU_hitting_time_general}}>0$ is a universal constant and any $T \le e^{c_{\ref{lem:OU_hitting_time_general}} d/2}$. For any $\alpha \ge \alpha_{\ref{lem:OU_hitting_time_general}}$ where $\alpha_{\ref{lem:OU_hitting_time_general}} \ge 1$ is a universal constant,
\begin{align*}
\PP\Bigg( \sup_{t \in [0,T]} \Big\| \int_0^t e^{-\alpha (t-s)} \DERIV B_s \Big\| > 0.1 \sqrt{d} \Bigg) \le 3\alpha e^{-c_{\ref{lem:OU_hitting_time_general}} d}.
\end{align*}
\end{lemma}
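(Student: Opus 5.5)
The plan is to discretize the time interval $[0,T]$ into unit-length (or $1/\alpha$-length) blocks, control the supremum of the OU-type process $Y_t := \int_0^t e^{-\alpha(t-s)}\,\DERIV B_s$ on each block by a Gaussian-process argument, and then take a union bound over the at most $\lceil \alpha T\rceil \le \alpha e^{c d/2}+1$ blocks. The factor $\alpha$ in the bound $3\alpha e^{-cd}$ suggests blocks of length $1/\alpha$. Since $T\le e^{cd/2}$, a per-block failure probability of order $e^{-3cd/2}$ is enough after the union bound.

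\textbf{Step 1 (marginal variance).} For each fixed $t$, $Y_t \sim N\big(0, \frac{1-e^{-2\alpha t}}{2\alpha} I_d\big)$. Hence $\E\|Y_t\|^2 \le \frac{d}{2\alpha}$, which is much smaller than $0.01 d$ once $\alpha$ is a large universal constant. To reduce the norm of a vector to a scalar Gaussian process, write $\|Y_t\| = \sup_{v\in\cS^{d-1}}\langle v, Y_t\rangle$. On a block $I=[t_0,t_0+1/\alpha]$ this gives the centered Gaussian process $f_{(t,v)} = \langle v,Y_t\rangle$ indexed by $I\times\cS^{d-1}$. Its maximal variance is $\sigma^2 \le \frac1{2\alpha}$.

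\textbf{Step 2 (expected supremum via Theorem \ref{thm:dudleychaining}).} We bound the canonical metric. For $s<t$ in $I$,
\begin{align*}
\E\big[(f_{(t,v)}-f_{(s,w)})^2\big]^{1/2} \lesssim \|v-w\|/\sqrt{\alpha} + \sqrt{|t-s|},
\end{align*}
using $Y_t - e^{-\alpha(t-s)}Y_s = \int_s^t e^{-\alpha(t-u)}\DERIV B_u$ and $1-e^{-\alpha(t-s)}\le \alpha(t-s)$. Covering $\cS^{d-1}$ at scale $\eps\sqrt\alpha$ costs $(3/(\eps\sqrt\alpha))^d$, and covering the interval at scale $\eps^2$ costs $1/(\alpha\eps^2)$. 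The diameter is $O(1/\sqrt\alpha)$, so Dudley's integral gives $\E\sup_I \|Y_t\| \le C\sqrt{d/\alpha}$. This is at most $0.05\sqrt d$ for $\alpha\ge\alpha_{\ref{lem:OU_hitting_time_general}}$ large.

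\textbf{Step 3 (tail via Theorem \ref{thm:borellTIS}).} Borell-TIS with $u=0.05\sqrt d$ and $\sigma^2\le 1/(2\alpha)$ gives
\begin{align*}
\PP\big(\sup_I\|Y_t\|>0.1\sqrt d\big)\le e^{-\alpha d/200}.
\end{align*}
Choose $c$ with $\alpha d/200 \ge 2cd$ (for instance $c=\alpha_{\ref{lem:OU_hitting_time_general}}/400$, which is fine since larger $\alpha$ only helps). Summing over at most $\alpha T+1\le 2\alpha e^{cd/2}$ blocks gives a total failure probability of at most $2\alpha e^{-3cd/2}\le 3\alpha e^{-cd}$.

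\textbf{Main obstacle.} The chaining computation in Step 2 is the hard part. It requires the joint $(t,v)$ metric estimate and verifying that the entropy integral is $O(\sqrt{d/\alpha})$ rather than something carrying a spurious $\log$ of the block count. Blocking first is exactly what keeps the time covering numbers bounded.
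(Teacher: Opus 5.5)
Your proof is correct, but it handles the supremum over each block of length $1/\alpha$ differently from the paper.

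The paper splits that control into two pieces. First, it bounds $\|Z_{t/\alpha}\|$ at the grid points using only marginal Gaussian norm concentration. Second, it writes the within-block increment as $(e^{-\alpha h}-1)Z_{t/\alpha}$ plus a fresh stochastic integral. After the Brownian rescaling $u=\alpha(s-t/\alpha)$, that integral becomes $\frac{e^{-\alpha h}}{\sqrt\alpha}\int_0^{\alpha h}e^u\,\DERIV\tilde B_u$. Dudley and Borell--TIS are then invoked only once, in the $\alpha$-free Lemma \ref{lem:OU_initial_lemma} for $\sup_{t\in[0,1]}\|\int_0^t e^s\,\DERIV B_s\|$.

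You instead apply Dudley and Borell--TIS directly to $\langle v,Y_t\rangle$ on $I\times\cS^{d-1}$. The same Markov decomposition $Y_t=e^{-\alpha(t-s)}Y_s+\int_s^t e^{-\alpha(t-u)}\DERIV B_u$ goes into your canonical-metric estimate rather than into a separate increment event. Your stated metric bound does hold on a block: by independence of the two pieces and $1-e^{-x}\le x$, one gets $\E[\langle w,Y_t-Y_s\rangle^2]\le \alpha(t-s)^2/2+(t-s)\le \tfrac32(t-s)$. The substitution $\eps=\delta/\sqrt\alpha$ then makes the entropy integral $O(\sqrt{d/\alpha})$ with no $\alpha$-dependent logarithm.

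What each route buys:
\begin{itemize}
\item Your route is more compact and needs no separate grid event. It also gives a per-block tail $e^{-\Omega(\alpha d)}$ that improves as $\alpha$ grows, and it never uses a large-$d$ assumption.
\item The paper's route keeps the chaining scale-free: it is done once, for a fixed process with an $\alpha$-independent metric. All $\alpha$-dependence is pushed into elementary Brownian scaling and norm concentration. Its per-block tail is $e^{-\Omega(d)}$, uniform in $\alpha$.
\end{itemize}

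There is one harmless arithmetic slip. With $u=0.05\sqrt d$ and $2\sigma^2\le 1/\alpha$, Borell--TIS gives $e^{-\alpha d/400}$, not $e^{-\alpha d/200}$. So take $c=\alpha_{\ref{lem:OU_hitting_time_general}}/800$; nothing else in your argument changes.
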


To prove Lemma \ref{lem:OU_hitting_time_general}, we will first need the following Lemma.
\begin{lemma}\label{lem:OU_initial_lemma}
Let $B_t$ be a standard Brownian motion in $\R^d$. Then there exists universal constants $K_{\ref{lem:OU_initial_lemma}}, c_{\ref{lem:OU_initial_lemma}} > 0$ such that 
\begin{align*}
\PP\Bigg( \sup_{t \in [0,1]} \Big\| \int_0^{t} e^s \DERIV \tilde B_s \Big\| > K_{\ref{lem:OU_initial_lemma}}\sqrt{d} \Bigg) \le e^{-c_{\ref{lem:OU_initial_lemma}} d}.
\end{align*}
\end{lemma}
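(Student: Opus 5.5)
Set $Y_t := \int_0^t e^s \,\DERIV \tilde B_s$ for $t\in[0,1]$, a continuous $\R^d$-valued centered Gaussian process with independent coordinates. Our plan is to write the norm as a supremum of a scalar Gaussian process,
\[
\sup_{t\in[0,1]}\|Y_t\| = \sup_{(t,u)\in[0,1]\times\cS^{d-1}} \langle u, Y_t\rangle =: \sup_{(t,u)\in \mathcal T} f_{t,u}.
\]
We then bound its expectation by $O(\sqrt d)$ using Dudley chaining (Theorem \ref{thm:dudleychaining}) and obtain the tail bound from Borell-TIS (Theorem \ref{thm:borellTIS}). The process $f$ is centered and almost surely bounded, since $Y$ has continuous paths on a compact interval. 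The variance proxy is
\[
\sigma_{\mathcal T}^2=\sup_{t,u}\E\langle u,Y_t\rangle^2=\sup_t \int_0^t e^{2s}\,\DERIV s=\tfrac{e^2-1}{2}\le 4.
\]
This is a constant, which is what makes the tail exponential in $d$.

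For the expectation we need the canonical metric on $\mathcal T$. Take $t_1\le t_2$. By Itô isometry,
\[
\E\big(\langle u_1,Y_{t_1}\rangle-\langle u_2,Y_{t_2}\rangle\big)^2
\le 2\,\E\langle u_1-u_2,Y_{t_1}\rangle^2+2\,\E\langle u_2,Y_{t_2}-Y_{t_1}\rangle^2
\le 2\cdot 4\,\|u_1-u_2\|^2+2\int_{t_1}^{t_2} e^{2s}\,\DERIV s .
\]
The last integral is at most $e^2|t_2-t_1|$. Hence $d((t_1,u_1),(t_2,u_2))\le C(\|u_1-u_2\|+|t_1-t_2|^{1/2})$ and the diameter is $O(1)$. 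Covering $\cS^{d-1}$ at scale $\eps/(2C)$ needs $(6C/\eps)^d$ points, and covering $[0,1]$ in the $|\cdot|^{1/2}$ scale needs $O(\eps^{-2})$ points. So $\log\cN(\mathcal T,d,\eps)\le d\log(C'/\eps)+2\log(C'/\eps)$. Dudley's integral $\int_0^{O(1)}\sqrt{(d+2)\log(C'/\eps)}\,\DERIV\eps$ converges because $\sqrt{\log(1/\eps)}$ is integrable at $0$, giving $\E\sup f\le C''\sqrt d$.

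Finally, apply Borell-TIS with $u=\sqrt d$:
\[
\PP\big(\sup\|Y_t\|>(C''+1)\sqrt d\big)\le e^{-d/(2\sigma_{\mathcal T}^2)}\le e^{-d/8}.
\]
Taking $K_{\ref{lem:OU_initial_lemma}}=C''+1$ and $c_{\ref{lem:OU_initial_lemma}}=1/8$ proves the lemma.

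We expect the main obstacle to be the metric entropy step, in particular handling the time index, where the increments are only Hölder-$1/2$ rather than Lipschitz. The $\eps^{-2}$ covering count contributes only a $\log$ factor, so it is harmless. The variance bound for the time increment needs to be stated carefully, since it is an Itô integral over $[t_1,t_2]$ and not simply $e^{t_2}-e^{t_1}$. If this becomes messy, an alternative is to handle the time dependence via Doob's maximal inequality applied to the submartingale $e^{\lambda\|Y_t\|}$, since $Y_t$ is a martingale. We prefer the chaining route because the paper has already set up those tools for this purpose.
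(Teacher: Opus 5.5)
Your proposal is correct and follows essentially the same route as the paper. You write $\sup_t\|Y_t\|$ as the supremum of $\langle u, Y_t\rangle$ over $[0,1]\times\cS^{d-1}$, and bound the canonical metric by $O(\|u_1-u_2\|+|t_1-t_2|^{1/2})$ via It\^{o}'s isometry, using $\int_{t_1}^{t_2}e^{2s}\,\DERIV s\le e^2|t_2-t_1|$. That gives $(C/\eps)^{O(d)}$ covering numbers and $\E\sup\lesssim\sqrt d$ from Dudley; Borell--TIS with variance proxy at most $4$ and deviation $\sqrt d$ then yields $K=C''+1$, $c=1/8$, exactly as in the paper.
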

\begin{proof}[Proof of Lemma \ref{lem:OU_initial_lemma}]
Note we can write $\big\| \int_0^{t} e^s \DERIV B_s \big\| = \sup_{v \in \cS^{d-1}} \Big\langle \int_0^{t} e^s \DERIV B_s, v \Big\rangle$, where $\cS^{d-1}$ denotes the unit sphere in $\R^d$. Thus $\E\Big[ \sup_{t \in [0,1]} \big\| \int_0^{t} e^s \DERIV B_s \big\| \Big] = \E\Big[ \sup_{t \in [0,1], v \in \cS^{d-1}} \Big\langle \int_0^{t} e^s \DERIV B_s, v \Big\rangle \Big]$. Let $\bar Z_{t,v} := \Big\langle \int_0^t e^s \DERIV B_s, v \Big\rangle = \int_0^t e^s \langle v, \DERIV B_s \rangle$. Note the $\bar Z_{t,v}$ is a centered Gaussian process.

Now we use Dudley Chaining (Theorem \ref{thm:dudleychaining}) to upper bound $\E\big[ \sup_{t \in [0,1]} \big\| \int_0^{t} e^s \DERIV B_s \big\| \big] = \E\big[ \sup_{t \in [0,1], v \in \cS^{d-1}} \bar Z_{t,v} \big]$, with $T=[0,1] \times \cS^{d-1}$. 
Recall the metric is $d\big((v_1, t_1),(v_2, t_2)\big) = \E\big[ ( \bar Z_{t_1, v_1} - \bar Z_{t_2, v_2} )^2 \big]^{1/2}$ for $0 \le t_1 \neq t_2 \le 1$, $v_1, v_2 \in \cS^{d-1}$. Suppose WLOG that $t_1 \le t_2$. Note that for a vector $u$, $\langle u, \DERIV B_s \rangle$ is a 1d Brownian motion scaled by $\|u\|$. We compute via It\^{o}'s Isometry (see e.g. Theorem 6.1, \citet{steele2001stochastic}),
\begin{align*}
\E\big[ ( \bar Z_{t_1, v_1} - \bar Z_{t_2, v_2} )^2 \big] &= \E\Big[ \Big( \int_0^{t_1} e^s \langle v_1, dB_s \rangle  - \int_0^{t_2} e^s \langle v_2, dB_s \rangle\Big)^2 \Big] \\
&= \E\Big[ \Big( \int_0^{t_1} e^s \langle v_1-v_2, dB_s \rangle - \int_{t_1}^{t_2} e^s \langle v_2, dB_s \rangle \Big)^2 \Big] \\
&\le 2\E\Bigg[ \Big( \int_0^{t_1} e^s \langle v_1-v_2, dB_s \rangle \Big)^2 + \Big( \int_{t_1}^{t_2} e^s \langle v_2, dB_s \rangle \Big)^2\Bigg] \\
&= 2 \|v_1-v_2\|^2 \int_0^1 e^{2s} \dd s + 2 \| v_2 \|^2 \int_{t_1}^{t_2} e^{2s} \dd s \\
&\le 8 \|v_1-v_2\|^2 + 8 (t_2-t_1)\,.
\end{align*}
A similar argument applies when $t_1>t_2$, and consequently we have 
\begin{align*}
d\big((v_1, t_1), (v_2, t_2)\big) \le \sqrt{8 \|v_1-v_2\|^2 + 8 |t_2-t_1|} \le 3\big(\|v_1-v_2\|+|t_2-t_1|^{1/2}\big)
\end{align*}
and $\text{diam}(T) \le 7$. Moreover, this shows that the covering number $\cN(T, d, \eps)$ of $T$ w.r.t. $d(\cdot,\cdot)$ is at most $N([0,1], d_1, \eps/6) \cdot \cN(\cS^{d-1}, d_2, \eps/6)$, where $d_1(v_1,v_2)=\|v_1-v_2\|$ and $d_2(t_1, t_2)=|t_1-t_2|^{1/2}$. Hence, $\cN(T, d, \eps) \le N([0,1], d_1, \eps/6) \cdot \cN(\cS^{d-1}, d_2, \eps/6) \le \big( \frac{C}{\eps}\big)^d \frac{C}{\eps^2} \le \big(\frac{C}{\eps}\big)^{3d}$, where $C>0$ is an absolute constant. Consequently again letting $C>0$ denote a large enough absolute constant, Dudley Chaining (Theorem \ref{thm:dudleychaining}) yields
\begin{align}
\E\Big[ \sup_{t \in [0,1]} \big\| \int_0^{t} e^s \DERIV B_s \big\| \Big] &= \E\Big[ \sup_{t \in [0,1], v \in \cS^{d-1}} \Big\langle \int_0^{t} e^s \DERIV B_s, v \Big\rangle \Big] \notag \\
&\le C\int_0^{7} \sqrt{3d \log(C/\eps) }  \DERIV \eps \notag \\
&= C\sqrt{d} \int_0^{7} \sqrt{\log(C) + \log(1/\eps) }  \DERIV \eps \notag \\
&\le K'_{\ref{lem:OU_initial_lemma}} \sqrt{d},\label{eq:OU_initial_dudley}
\end{align}
by defining $K'_{\ref{lem:OU_initial_lemma}}>0$ as a large enough universal constant.

Next we establish a suitable tail bound. We apply the Borell-TIS Inequality (Theorem \ref{thm:borellTIS}) to a suitable Gaussian process. Recalling that $\bar Z_{t,v} := \Big\langle \int_0^t e^s \DERIV B_s, v \Big\rangle = \int_0^t e^s \langle v, \DERIV B_s \rangle$, applying It\^{o}'s Isometry and using $0 \le t \le 1$ yields
\begin{align*}
\E\big[ \bar Z_{t,v}^2 \big] = \int_0^t e^{2s} \| v\|^2 \DERIV t = \frac12 (e^{2t}-1) \le 4.
\end{align*}
Again, note the $\bar Z_{t,v}$ is a centered Gaussian process. Moreover $[0,1] \times \cS^{d-1}$ is compact and $ve^s$ is continuous in $s$. Since the integrand is square integrable, it follows that $\bar Z_{t,v}$ is almost-surely bounded. Now the Borell-TIS Inequality applied to the $\bar Z_{t,v}$ gives 
\begin{align}
\PP\Big( \sup_{t \in [0,1], v \in \cS^{d-1}} \bar Z_{t,v} \ge \E\Big[ \sup_{t \in [0,1], v \in \cS^{d-1}} \bar Z_{t,v} \Big] + t' \Big) \le e^{-t'^2 / 8}.\label{eq:OU_initial_BorellTIS}
\end{align}
Noting $\sup_{t \in [0,1], v \in \cS^{d-1}} \bar Z_{t,v} = \sup_{t\in[0,1]} Z_t$ and taking $t' = \sqrt{d}$ in the above, we conclude the proof upon taking $c_{\ref{lem:OU_initial_lemma}} = \frac18$, $K_{\ref{lem:OU_initial_lemma}} = K'_{\ref{lem:OU_initial_lemma}}+1$, and combining (\ref{eq:OU_initial_dudley}) and (\ref{eq:OU_initial_BorellTIS}).
\end{proof}

Now we can prove Lemma \ref{lem:OU_hitting_time_general}.
Since $\sup_{t \in [0,T]} \Big\| \int_0^t e^{-\alpha (t-s)} \DERIV B_s \Big\|$ does not decrease with $T$, by increasing $T$ to the nearest integer multiple of $\frac1{\alpha}$, we may without loss of generality suppose $T \alpha$ is an integer that is at least 1.
Let $Z_t := \int_0^t e^{-\alpha (t-s)} \DERIV B_s$. Let $E = \Big\{ \max_{t \in \{0, 1, \ldots, T\alpha\}} \| Z_{\frac{t}{\alpha}} \| > 0.03 \sqrt{d} \Big\}$. 
Note the desired event can be written as
\begin{align}
&\Big\{ \sup_{t \in [0,T]} \big\| Z_t \big\| > 0.1 \sqrt{d} \Big\} \subseteq E \cup \Bigg( E^C \cap \bigcup_{t=0}^{T \alpha -1} \Big\{ \sup_{s \in [\frac{t}{\alpha}, \frac{t+1}{\alpha}]} \| Z_s - Z_{\frac{t}{\alpha}} \| > 0.07 \sqrt{d} \Big\} \Bigg).\label{eq:OU_hitting_time_start}
\end{align}
We first upper bound $\PP(E)$. 
A direct calculation shows that for any $t \in [0,T]$, $Z_t = \int_0^t e^{-\alpha (t-s)} \DERIV B_s$ is distributed as per $Z_t \sim N\big(0, \frac{1-e^{-2\alpha t}}{2\alpha} I_d \big)$. Note $\frac{1-e^{-2\alpha t}}{2\alpha} \le \frac1{2\alpha}$. Thus by concentration of the norm of a $d$-dimensional Gaussian (see e.g. Theorem 3.1.1, \citet{vershynin2018high}), we have for $\alpha \ge \alpha_{\ref{lem:OU_hitting_time_general}}$ for a large enough universal constant $\alpha_{\ref{lem:OU_hitting_time_general}}$, that $\PP\big( \| Z_t \| > 0.03\sqrt{d} \big) \le e^{-d}$. 
Consequently a Union Bound and our condition on $T$ gives, taking $c_{\ref{lem:OU_hitting_time_general}} \le \frac12$,
\begin{align}
\PP(E) \le \sum_{t=0}^{T \alpha} \PP\big( \| Z_{t/\alpha} \| > 0.03\sqrt{d} \big) \le (T  \alpha+1) e^{-d} \le 2\alpha e^{-2c_{\ref{lem:OU_hitting_time_general}} d}.\label{eq:OU_hitting_time_eq1}
\end{align}
Next we upper bound the probability of the event $E^C \cap \bigcup_{t=0}^{T \alpha-1} \Big\{ \sup_{h \in [0, \frac1{\alpha}]} \| Z_{\frac{t}{\alpha}+h} - Z_{\frac{t}{\alpha}} \| > 0.07 \sqrt{d} \Big\}$. Fixing $t \in \{0, 1, \ldots, T \alpha -1\}$ in what follows, we write
\begin{align*}
Z_{\frac{t}{\alpha}+h} - Z_{\frac{t}{\alpha}} &= \int_0^{\frac{t}{\alpha}} e^{-\alpha (\frac{t}{\alpha}-s)}\big(e^{-\alpha h}-1\big) \DERIV B_s + \int_{\frac{t}{\alpha}}^{\frac{t}{\alpha}+h} e^{-\alpha (\frac{t}{\alpha}+h-s)} \DERIV B_s \\
&= \big(e^{-\alpha h}-1\big)Z_{\frac{t}{\alpha}} + \int_{\frac{t}{\alpha}}^{\frac{t}{\alpha}+h} e^{-\alpha (\frac{t}{\alpha}+h-s)} \DERIV B_s. 
\end{align*}
Note under $E^C$, we have $\|Z_{\frac{t}{\alpha}}\| \le 0.03\sqrt{d}$. Since $\big|e^{-\alpha h}-1 \big| \le 1$, it follows that
\begin{align*}
\Big\{ \sup_{h \in [0, \frac1{\alpha}]} \| Z_{\frac{t}{\alpha}+h} - Z_{\frac{t}{\alpha}} \| > 0.07 \sqrt{d} \Big\} \implies \Bigg\{ \sup_{h \in [0, \frac1{\alpha}]} \Big\| \int_{\frac{t}{\alpha}}^{\frac{t}{\alpha}+h} e^{-\alpha (\frac{t}{\alpha}+h-s)} \DERIV B_s \Big\| > 0.04\sqrt{d} \Bigg\}.
\end{align*}
Let $u = \alpha (s-\frac{t}{\alpha})$ and $\tilde B_u := \sqrt{\alpha}\big( B_{\frac{t}{\alpha}+\frac{u}{\alpha}} - B_{\frac{t}{\alpha}}\big)$ for $u \ge 0$. Thus $\tilde B_u$ is a standard Brownian motion and $\DERIV \tilde B_u = \sqrt{\alpha} \cdot \DERIV B_{\frac{t}{\alpha}+\frac{u}{\alpha}}$.
Now for any $h \in [0, \frac1{\alpha}]$, we may write
\begin{align*}
\int_{\frac{t}{\alpha}}^{\frac{t}{\alpha}+h} e^{-\alpha (\frac{t}{\alpha}+h-s)} \DERIV B_s = \int_0^{\alpha h} e^{-\alpha h + u} \DERIV B_{\frac{t}{\alpha}+\frac{u}{\alpha}} = \frac{e^{-\alpha h}}{\sqrt{\alpha}} \int_0^{\alpha h} e^u \DERIV \tilde B_u.
\end{align*}
We apply Lemma \ref{lem:OU_initial_lemma}, and take $\alpha_{\ref{lem:OU_hitting_time_general}} \ge \frac1{0.04^2 K_{\ref{lem:OU_initial_lemma}}^2}$ and $c_{\ref{lem:OU_hitting_time_general}} \le \frac12 c_{\ref{lem:OU_initial_lemma}}$. Thus as $[0, \alpha h] \subseteq [0,1]$,
\begin{align*}
\PP\Bigg( \sup_{h \in [0, \frac1{\alpha}]} \Big\| \int_{\frac{t}{\alpha}}^{\frac{t}{\alpha}+h} e^{-\alpha (\frac{t}{\alpha}+h-s)} \DERIV B_s \Big\| > 0.04\sqrt{d} \Bigg) &\le \PP\Bigg( \sup_{h \in [0, \frac1{\alpha}]} \Big\| \int_0^{\alpha h} e^u \DERIV \tilde B_u \Big\| > 0.04\sqrt{\alpha d} \Bigg) \\
&\le \PP\Bigg( \sup_{t' \in [0,1]} \Big\| \int_0^{t'} e^u \DERIV \tilde B_u \Big\|  > 0.04\sqrt{\alpha d} \Bigg) \\
&\le e^{-c_{\ref{lem:OU_initial_lemma}} d} \le e^{-2c_{\ref{lem:OU_hitting_time_general}} d}.
\end{align*}
A Union Bound thus gives
\begin{align}
\PP\Bigg( E^C \cap \bigcup_{t=0}^{T \alpha-1} \Big\{ \sup_{h \in [0, \frac1{\alpha}]} \| Z_{\frac{t}{\alpha}+h} - Z_{\frac{t}{\alpha}} \| > 0.07 \sqrt{d} \Big\} \Bigg) \le T \alpha \cdot e^{-2c_{\ref{lem:OU_hitting_time_general}} d} \le \alpha e^{-1.5c_{\ref{lem:OU_hitting_time_general}} d}.\label{eq:OU_hitting_time_eq2}
\end{align}
Combining (\ref{eq:OU_hitting_time_start}), (\ref{eq:OU_hitting_time_eq1}), (\ref{eq:OU_hitting_time_eq2}) proves Lemma \ref{lem:OU_hitting_time_general}.

\section{Proofs of Theorems \ref{thm:cone-tv} and \ref{thm:dbi-tv}}

\subsection{Proof of Theorem \ref{thm:cone-tv}}\label{subsec:thm_cone_pf} 

We prove each of the claimed assertions in Theorem \ref{thm:cone-tv} as follows.

\textbf{Far from $\pitarget$:} Let $X_t(x_0)$ denote the law of (\ref{eq:langevin_corrupteds1}) when initialized at $x_0$. Recall the crux is to prove Lemma \ref{lem:OUescapeprob}, which we do using Lemma \ref{lem:OU_hitting_time_general}.
\begin{proof}[Proof of Lemma \ref{lem:OUescapeprob}]
For all $0 \le t \le \tau(x_0)$, $\hat s\big(X_t(x_0) \big) = -\alpha X_t(x_0)$. Thus we remark that the SDE (\ref{eq:langevin_corrupteds1}) and the SDE (\ref{eq:rescaled_OU_1}) have the same law through time $\tau(x_0)$.
Let $\bar{X_t}(x_0)$ denote the law of $\bar X_t$ from (\ref{eq:rescaled_OU_1}) when initialized at $x_0$. Let $\bar \tau(x_0) := \inf_{t \ge 0} \big\{ \|\bar{X_t}(x_0)\| \ge 4\sqrt{d} \big\}$. By the above, $\bar \tau(x_0) \equiv \tau(x_0)$ as random variables. Solving the SDE (\ref{eq:rescaled_OU_1}) explicitly gives
\begin{align*}
\bar X_t = e^{-\alpha t} x_0 + \sqrt{2} \int_0^t e^{-\alpha (t-s)} \DERIV B_s.
\end{align*}
If $\tau(x_0) \le T$, as argued above we must have $\bar \tau (x_0) \le T$. Therefore there exists a $t \in [0,T]$ such that 
\begin{align*}
4\sqrt{d} \le \| \bar X_t(x_0) \| &\le \| e^{-\alpha t} x_0 \| + \sqrt{2} \Big\| \int_0^t e^{-\alpha (t-s)} \DERIV B_s \Big\| \\
&\le 1.1\sqrt{d} + \sqrt{2} \Big\| \int_0^t e^{-\alpha (t-s)} \DERIV B_s \Big\|\,,
\end{align*}
where we use that $\|x_0\| \le 1.1\sqrt{d}$. This means that $\Big\| \int_0^t e^{-\alpha (t-s)} \DERIV B_s \Big\| > 2\sqrt{d}$, thus
\begin{align*}
\big\{ \tau(x_0) \le T \} \subseteq \Big\{ \sup_{t \in [0,T]} \Big\| \int_0^t e^{-\alpha (t-s)} \DERIV B_s \Big\| > 2\sqrt{d} \Big\}.
\end{align*}
By Lemma \ref{lem:OU_hitting_time_general}, the latter event has probability at most $3 \alpha e^{-c_{\ref{lem:OU_hitting_time_general}} d}$, proving Lemma \ref{lem:OUescapeprob}.
\end{proof}
Now consider any $T \le e^{c_{\ref{lem:OU_hitting_time_general}} d/2}$.
Let $\hat \pi_1$, $\hat \pi_2$ denote the law of (\ref{eq:langevin_corrupteds1}) after time $T$ conditioned on $\|X_0\| \le 1.1\sqrt{d}$, $\| X_0 \| > 1.1\sqrt{d}$ respectively. Let $p=\PP_{X_0 \sim N(0, I_d)}(\|X_0\| \le 1.1\sqrt{d})$ and let $A := \big\{ x:\|x\| \ge 4\sqrt{d}\big\}$. By Lemma \ref{lem:gaussianconcentration} on Gaussian concentration of the norm, $p  \ge 1-e^{-\Omega(d)}$ and $\pitarget(A) \ge 1-e^{-\Omega(d)}$. Thus
\begin{align*}
&\TV\big( \cL(X_T), \pitarget \big) \\
&\qquad = \sup_{E \subseteq \R^d} \big| p \hat \pi_1(E) + (1-p) \hat \pi_2(E) - \pitarget(E) \big| \\
&\qquad \ge \big| p \hat \pi_1(A) + (1-p) \hat \pi_2(A) - \pitarget(A) \big| \\
&\qquad \ge \Big| p\big| \hat \pi_1(A) - \pitarget(A) \big| - (1-p) \big| \hat \pi_2(A) - \pitarget(A) \big| \Big| \\
&\qquad \ge (1-e^{-\Omega(d)})\big( 1 - e^{-\Omega(d)} - \hat \pi_1(A) \big) - e^{-\Omega(d)},
\end{align*}
where we used Triangle Inequality in the second inequality above. By Lemma \ref{lem:OUescapeprob}, since $X_T(x_0) \in A$ implies that $\tau(x_0) \le T$, we have $\hat \pi_1(A) \le 3\alpha e^{-c_{\ref{lem:OU_hitting_time_general}} d/2}$. Plugging into the above, and as $c_{\ref{lem:OU_hitting_time_general}}$ is a universal constant, we have for $d \ge d_0$ that
\begin{align*}
\TV\big( \cL(X_T), \pitarget \big) \ge 1 - e^{-\Omega(d)}.
\end{align*}
\textbf{Small $L^p$ error:} Note $\hat s(x) - \grad \log \pitarget=0$ if $\|x \| \ge 5\sqrt{d}$. Otherwise if $\|x \| < 5\sqrt{d}$, as $\psi(x) \in [0,1]$, $\| \hat s(x) - \grad \log \pitarget \| \le \alpha \| x\| < 5\alpha \sqrt{d}$. 
Note $\pitarget\Big( \big\{ \|x \| < 5\sqrt{d} \big\} \Big) \le \pitarget\Big( \big\{ \| x-\mu \| \ge 2\sqrt{d} \big\} \Big) \le e^{-\Omega(d)}$ by Gaussian concentration of the norm (Lemma \ref{lem:gaussianconcentration}). Thus as $\alpha=\alpha_{\ref{lem:OU_hitting_time_general}}$ is a universal constant, for $d \ge d_0(p)$,
\begin{align*}
\E_{x \sim \pitarget}\big[ \| \hat s(x) - \grad \log \pitarget(x) \|^p \big]^{1/p} \le \Big((5 \alpha)^p d^{p/2} e^{-\Omega(d)}\Big)^{1/p} \le e^{-\Omega(d)}.
\end{align*}
\textbf{Lipschitz $\hat s$:} It is easy to check $\hat s(x)$ is Lipschitz whenever $\|x\| < 4\sqrt{d}$ or $\|x\| > 5\sqrt{d}$. In Lemma \ref{lem:mollifier_def_properties}, we construct $\psi(\cdot) \in [0,1]$ to be $H$-Lipschitz for a universal constant $H>0$, and $\psi(\cdot)$ is infinitely differentiable.
For $x$ such that $4\sqrt{d} \le \|x\| \le 5\sqrt{d}$, we can compute
\begin{align*}
\grad \hat s(x) &= -\alpha\psi\Big(\frac{x}{\sqrt{d}}\Big)-\frac{\alpha x}{\sqrt{d}} \grad \psi\Big(\frac{x}{\sqrt{d}}\Big) - \Big(1-\psi\Big(\frac{x}{\sqrt{d}}\Big)\Big) + \frac{x}{\sqrt{d}}\grad \psi\Big(\frac{x}{\sqrt{d}}\Big).
\end{align*}
Thus as $\alpha=\alpha_{\ref{lem:OU_hitting_time_general}}$ is a universal constant, $\hat s$ is Lipschitz with a universal constant bound on its Lipschitz constant.

This concludes the proof of Theorem \ref{thm:cone-tv}.

\subsection{Proof of Theorem \ref{thm:dbi-tv}}\label{subsec:dbipfs}
Here we prove Theorem \ref{thm:dbi-tv}. We first prove a useful helper result.
\begin{lemma}\label{lem:dbi_mostlyfaraway}
Suppose $n=\poly(d)$ and $x_1, \ldots, x_n$ are in general position. Let $A := \big\{ x:\|x-x_i\| \ge 0.16\sqrt{d}\text{ for all }1 \le i \le n\big\}$. Then $\pitarget(A) \ge 1-e^{-\Omega(d)}$ for $d \ge d_0$, where $d_0$ is a universal constant.
\end{lemma}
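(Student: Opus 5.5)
We bound the complement $A^C = \bigcup_{i=1}^n B_i$, where $B_i := \{x : \|x - x_i\| < 0.16\sqrt{d}\}$, by a union bound:
\begin{align*}
\pitarget(A^C) \le \sum_{i=1}^n \pitarget(B_i).
\end{align*}
Since $n = \poly(d)$, it suffices to show that each $\pitarget(B_i) \le e^{-c d}$ for a universal constant $c>0$. Then $\pitarget(A^C) \le \poly(d)\, e^{-cd} = e^{-\Omega(d)}$ once $d \ge d_0$.

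Fix $i$ and let $X \sim N(0, I_d)$. The only feature of general position we use is $0.5\sqrt{d} \le \|x_i\| \le 2\sqrt{d}$; the pairwise separation is not needed here. Expand
\begin{align*}
\|X - x_i\|^2 = \|X\|^2 - 2\langle X, x_i\rangle + \|x_i\|^2 .
\end{align*}
We control the two random terms separately.
\begin{itemize}
\item By Lemma \ref{lem:gaussianconcentration} with a small fixed $t$ (say $t = 0.05$), $\|X\|^2 \ge 0.9\, d$ except with probability $e^{-\Omega(d)}$.
\item Since $\langle X, x_i\rangle \sim N(0, \|x_i\|^2)$ with $\|x_i\|^2 \le 4d$, the standard Gaussian tail (or Lemma \ref{lem:lipschitzfunctionsconcentration} applied to the $\|x_i\|$-Lipschitz map $x \mapsto \langle x, x_i\rangle$) gives $|\langle X, x_i\rangle| \le 0.05\, d$ except with probability $2e^{-\Omega(d)}$. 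The variance is $O(d)$ and the deviation is of order $d$, so the exponent is $-\Omega(d^2/d) = -\Omega(d)$.
\end{itemize}
On the intersection of these two good events,
\begin{align*}
\|X - x_i\|^2 \ge 0.9\,d - 0.1\,d + \|x_i\|^2 \ge 0.8\,d,
\end{align*}
which is far larger than $(0.16)^2 d = 0.0256\, d$. Hence $\pitarget(B_i) \le e^{-\Omega(d)}$, and the implied constant depends on neither $i$ nor the particular $x_i$.

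The only point needing care is that the constants are uniform. The exponent $c$ must be a universal constant, independent of $i$ and of the points $x_i$, so that the polynomial factor $n$ is absorbed. This holds because both tail bounds depend only on the norm bound $\|x_i\| \le 2\sqrt{d}$. Choosing $d_0$ so large that $n(d)\, 3e^{-cd} \le e^{-cd/2}$ completes the argument. Nothing in the proof is delicate: a crude bound such as $\|X\|^2 \ge 0.5\,d$ would also suffice.
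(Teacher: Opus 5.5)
Your proof is correct, but it bounds each ball differently from the paper. Both arguments union-bound over the $n=\poly(d)$ balls and show each has $\pitarget$-mass $e^{-\Omega(d)}$, with constants uniform in $i$.

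The paper projects onto the single direction $x_i/\|x_i\|$ and applies one-dimensional concentration to this $1$-Lipschitz functional. This relies on the \emph{lower} bound $\|x_i\|\ge 0.5\sqrt d$. Any $x$ with $\|x-x_i\|\le 0.16\sqrt d$ satisfies $\langle x,x_i\rangle/\|x_i\|\ge\|x_i\|-0.16\sqrt d\ge 0.34\sqrt d$, and this event has probability at most $e^{-(0.34)^2 d/2}$. The inclusion actually displayed in the paper points the wrong way: $x=x_i$ lies in the ball but has cosine $1$ with $x_i$. Likewise, its final event $\{|\langle x_i,X\rangle|/\|x_i\|\le 0.71\sqrt d\}$ has probability close to $1$. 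The one-sided bound just stated is the intended, correct version.

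You instead expand $\|X-x_i\|^2$ and combine norm concentration of $\|X\|$ with a tail bound on $\langle X,x_i\rangle$. This uses only the \emph{upper} bound $\|x_i\|\le 2\sqrt d$; the lower bound you list is never invoked. Your route captures the more intrinsic reason the lemma holds: a ball of radius $0.16\sqrt d$ is small compared with the $\sqrt d$-shell that carries the Gaussian mass, wherever it is centered. In fact, on $\{\|X\|^2\ge 0.9d\}$, landing in $B_i$ forces the standard normal $\langle X,x_i\rangle/\|x_i\|$ to exceed $(0.87d+\|x_i\|^2)/(2\|x_i\|)\ge\sqrt{0.87d}$ by AM--GM. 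So your argument needs no assumption on $\|x_i\|$ at all. The paper's projection is a shorter one-line computation, but it genuinely needs the centers bounded away from the origin.
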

\begin{proof}
Equivalently, we will upper bound $\pitarget(A^C) \le e^{-\Omega(d)}$. Note $A^C = \big\{x:\|x-x_i\| \le 0.16\sqrt{d}\text{ for some }x_i\big\}$. For any given $i$, since the $x_1, \ldots, x_n$ are in general position we have $2\sqrt{d} \ge \|x_i\| \ge 0.5\sqrt{d}$, and a direct geometric argument gives
\begin{align*}
\big\{x:\|x-x_i\| \le 0.16\sqrt{d} \big\} &\subseteq \Big\{ x: \Big| \frac{\langle x_i, x \rangle }{\| x_i \| \|x\| } \Big| \le \frac{0.15}{0.5}, \|x\| \le 2.34\sqrt{d} \Big\} \\
&\subseteq \Big\{ x: \Big| \frac{\langle x_i, x \rangle }{\| x_i \| } \Big| \le 0.71\sqrt{d} \Big\}.
\end{align*}
Note $f(x)=\frac{\langle x_i, x \rangle }{\| x_i \|}$ is a 1-Lipschitz function of $x$. Also by spherical symmetry of the standard Gaussian $\pitarget$, we have ${\E}_{X \sim \pitarget}\Big[ \frac{\langle x_i, X \rangle }{\| x_i \| } \Big] = 0$. Thus Lemma \ref{lem:lipschitzfunctionsconcentration} gives
\begin{align*}
\pitarget\Big( \big\{x:\|x-x_i\| \le 0.16\sqrt{d} \big\} \Big) = {\PP}_{X \sim N(0, I_d)}\Big( \Big| \frac{\langle x_i, X \rangle }{\| x_i \|} \Big| \le 0.71\sqrt{d} \Big) \le 2e^{-\Omega(d)}.
\end{align*}
Consequently, a Union Bound gives 
\begin{align*}
\pitarget(A^C) \le 2n e^{-\Omega(d)} \le \poly(d) e^{-\Omega(d)} \le e^{-\Omega(d)}
\end{align*}
for $d \ge d_0$ larger than a universal constant. This proves the Lemma.
\end{proof}

Now, we prove each of the claimed assertions in Theorem \ref{thm:dbi-tv} as follows.

\textbf{Far from $\pitarget$ if the $x_1, \ldots, x_n$ are in general position:} Let $X_t(x_0)$ denote the law of
\begin{align}\label{eq:langevin_corrupted_s3}
\dd X_t = \hat s(X_t)\,\dd t + \sqrt{2}\,\dd B_t, \qquad X_0 = x_0\,.
\end{align}
The crux is to show the following Lemma.

\begin{lemma}\label{lem:OUescapeprob_dbi}
Suppose the $x_1, \ldots, x_n$ are in general position. Suppose $d \ge K_{\ref{lem:OU_hitting_time_general}}$ where $K_{\ref{lem:OU_hitting_time_general}}>0$ is a universal constant and consider any $T \le e^{c_{\ref{lem:OU_hitting_time_general}} d/2}$. Let $\tau(x_0) = \inf_{t \ge 0} \big\{ \|X_t(x_0) - x_0\| \ge 0.15\sqrt{d} \big\}$. Then for any $1 \le i \le n$, we have with probability at least $1 - 3 \alpha e^{-c_{\ref{lem:OU_hitting_time_general}} d}$ that $\tau(x_i) > T$.
\end{lemma}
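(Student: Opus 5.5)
The plan mirrors the proof of Lemma \ref{lem:OUescapeprob}, replacing the origin by the data point $x_i$. Fix $i \in [n]$ and start the process (\ref{eq:langevin_corrupted_s3}) at $x_0 = x_i$. By Definition \ref{def:general_position}, every $x$ with $\|x - x_i\| \le 0.15\sqrt{d} < 0.16\sqrt{d}$ has $i$ as its unique nearby index. By the construction of $\hat s$ in Theorem \ref{thm:dbi-tv}, this gives $\hat s(x) = -\alpha(x - x_i)$ on the closed ball $\{x : \|x - x_i\| \le 0.15\sqrt{d}\}$. Hence, up to time $\tau(x_i)$, the process $Y_t := X_t(x_i) - x_i$ solves the rescaled OU equation
\begin{align*}
\DERIV Y_t = -\alpha Y_t \,\DERIV t + \sqrt{2}\,\DERIV B_t, \qquad Y_0 = 0 .
\end{align*}
Concretely, let $\bar Y_t$ be the global strong solution of this OU equation driven by the same Brownian motion. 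Strong uniqueness for the Lipschitz drift $\hat s$ (already noted after Theorem \ref{thm:dbi-tv}) shows that $X_t(x_i) - x_i$ and $\bar Y_t$ agree on $[0, \tau(x_i)]$. Consequently $\tau(x_i) = \bar\tau := \inf\{t \ge 0 : \|\bar Y_t\| \ge 0.15\sqrt{d}\}$ almost surely.

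Solving the OU equation explicitly, and using that the initial condition vanishes, gives
\begin{align*}
\bar Y_t = \sqrt{2}\int_0^t e^{-\alpha(t-s)}\,\DERIV B_s .
\end{align*}
Now suppose $\tau(x_i) \le T$. By continuity of paths there is some $t \in [0,T]$ with $\|\bar Y_t\| = 0.15\sqrt{d}$, and therefore
\begin{align*}
\Big\|\int_0^t e^{-\alpha(t-s)}\,\DERIV B_s\Big\| = \frac{0.15}{\sqrt{2}}\sqrt{d} > 0.1\sqrt{d},
\end{align*}
since $0.15/\sqrt{2} \approx 0.106$. This yields the inclusion
\begin{align*}
\{\tau(x_i) \le T\} \subseteq \Big\{\sup_{t \in [0,T]} \Big\|\int_0^t e^{-\alpha(t-s)}\,\DERIV B_s\Big\| > 0.1\sqrt{d}\Big\}.
\end{align*}
Lemma \ref{lem:OU_hitting_time_general}, applied with $\alpha = \alpha_{\ref{lem:OU_hitting_time_general}}$, $d \ge K_{\ref{lem:OU_hitting_time_general}}$ and $T \le e^{c_{\ref{lem:OU_hitting_time_general}} d/2}$, bounds the probability of the right-hand event by $3\alpha e^{-c_{\ref{lem:OU_hitting_time_general}} d}$. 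This gives the claim.

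The main obstacle is the numerical margin. Because the start point is exactly the center $x_i$, there is no $e^{-\alpha t}x_0$ term eating into the budget, unlike in Lemma \ref{lem:OUescapeprob}. The threshold $0.15/\sqrt{2}$ only barely exceeds the $0.1$ in Lemma \ref{lem:OU_hitting_time_general}, so I would double-check this constant comparison carefully. I would also verify the identification of $\tau$ with $\bar\tau$ at the boundary: $\hat s$ equals $-\alpha(x - x_i)$ on the closed ball, including the sphere of radius $0.15\sqrt{d}$, so the two processes agree up to and including the hitting time.
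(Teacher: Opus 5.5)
Your proposal is correct and follows the paper's proof essentially step for step. You identify the dynamics up to $\tau(x_i)$ with the rescaled OU process centered at $x_i$, solve it explicitly from zero initial condition, use $0.15/\sqrt{2} > 0.1$ to place $\{\tau(x_i)\le T\}$ inside the supremum event, and invoke Lemma \ref{lem:OU_hitting_time_general}. Your pathwise coupling through strong uniqueness with the same Brownian motion is a slightly more careful version of the paper's equality-in-law identification of $\tau(x_i)$ with the OU exit time, not a different route.
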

\begin{proof}[Proof of Lemma \ref{lem:OUescapeprob_dbi}]
For all $0 \le t \le \tau(x_i)$, $\hat s(x) = -\alpha (x-x_i)$. Thus we remark that the SDE (\ref{eq:langevin_corrupted_s3}) with initialization $x_0 = x_i$ and the following SDE (\ref{eq:vanilla_langevin_sde_pf_dbi}) have the same law through time $\tau(x_i)$:
\begin{align}\label{eq:vanilla_langevin_sde_pf_dbi}
\DERIV \bar X_t = -\alpha \big(\bar X_t-x_i\big) \DERIV t + \sqrt{2} \DERIV B(t)\text{ where } \bar X(0) = x_i.
\end{align}
Let $\bar{X_t}(x_0)$ denote the law of $\bar X_t$ when initialized at $x_0$. Let $\bar \tau(x_0) := \inf_{t \ge 0} \Big\{ \|\bar{X_t}(x_0) - x_0\| \ge 0.15\sqrt{d} \Big\}$. By the above remark we have $\bar \tau(x_i) \equiv \tau(x_i)$ as random variables. 
Moreover, (\ref{eq:vanilla_langevin_sde_pf_dbi}) has the same law as the fixed translation $x_i$ plus a random variable drawn from the law of
\begin{align}\label{eq:vanilla_langevin_sde_pf_dbi_simple}
\DERIV \tilde X_t = -\alpha \tilde X_t \DERIV t + \sqrt{2} \DERIV B(t)\text{ where } \tilde X(0) = 0.
\end{align}
Let $\tilde{X_t}(x_0)$ denote the law of $\tilde X_t$ when initialized at $x_0$. Hence $\bar \tau(x_i), \tau(x_i), \tilde \tau(0)$ have the same law where $\tilde \tau(x_0) := \inf_{t \ge 0} \Big\{ \|\tilde{X_t}(x_0) \| \ge 0.15\sqrt{d} \Big\}$. 
Solving the SDE (\ref{eq:vanilla_langevin_sde_pf_dbi_simple}) explicitly we obtain
\begin{align*}
\tilde X_t = \sqrt{2} \int_0^t e^{-\alpha (t-s)} \DERIV B_s.
\end{align*}
If $\tau(x_0) \le T$, as argued above we must have $\tilde \tau (x_0) \le T$. Therefore there exists a $t \in [0,T]$ such that 
\begin{align*}
\Big\| \int_0^t e^{-\alpha (t-s)} \DERIV B_s \Big\| \ge \frac{0.15}{\sqrt{2}}\sqrt{d} > 0.1\sqrt{d}.
\end{align*}
By Lemma \ref{lem:OU_hitting_time_general}, this latter event has probability at most $3 \alpha e^{-c_{\ref{lem:OU_hitting_time_general}} d}$, proving Lemma \ref{lem:OUescapeprob_dbi}.
\end{proof}

Now consider any $T \le e^{c_{\ref{lem:OU_hitting_time_general}} d/2}$.
Let $\hat \pi_i$ denote the law of (\ref{eq:langevin_corrupted_s3}) after time $T$ with initialization $x_i$. Letting $\hat \pi$ denote the law of (\ref{eq:langevin_corrupteds2}) after time $T$, which is initialized at $\frac1n \sum_{i=1}^n \delta_{x_i}$, we note for all $E \subseteq \R^d$ that
\begin{align*}
\hat \pi(E) = \sum_{i=1}^n \frac1n \hat\pi_i(E).
\end{align*}
Let $A := \big\{ x:\|x-x_i\| \ge 0.16\sqrt{d}\text{ for all }1 \le i \le n\big\}$. Thus
\begin{align*}
\TV\big( \cL(X_T), \pitarget \big) &= \sup_{E \subseteq \R^d} \Big| \sum_{i=1}^n \frac1n \hat \pi_i(E) - \pitarget(E) \Big| \\
&\ge \Big| \sum_{i=1}^n \frac1n \hat \pi_i(A) - \pitarget(A) \Big|.
\end{align*}
Consider any $1 \le i \le n$. By Lemma \ref{lem:OUescapeprob_dbi}, since $X_t(x_i) \in A$ implies that $\tau(x_i) \le T$, we have $\hat \pi_i(A) \le 3\alpha e^{-c_{\ref{lem:OU_hitting_time_general}} d/2}$. By Lemma \ref{lem:dbi_mostlyfaraway}, $\pitarget(A) \ge 1-e^{-\Omega(d)}$. Plugging into the above, and as $c_{\ref{lem:OU_hitting_time_general}}$ is a universal constant, we have for $d \ge d_0$ that
\begin{align*}
\TV\big( \cL(X_T), \pitarget \big) &\ge \Big| \sum_{i=1}^n \frac1n \hat \pi_i(A) - \pitarget(A) \Big| \\
&\ge 1-e^{-\Omega(d)} - n \cdot \frac1n \cdot 3\alpha e^{-c_{\ref{lem:OU_hitting_time_general}} d/2} \\
&\ge 1-e^{-\Omega(d)},
\end{align*}
as desired.

\textbf{Small $L^p$ error if $x_1, \ldots, x_n$ are in general position:} Let $A := \big\{ x:\|x-x_i\| \ge 0.16\sqrt{d}\text{ for all }1 \le i \le n\big\}$. Note $\hat s(x) - \grad \log \pitarget=0$ if $x \in A$. Otherwise if $x \in A^C$, as $\psi(x) \in [0,1]$, $\| \hat s(x) - \grad \log \pitarget \| \le \alpha \| x - x_i \| < 3\alpha \sqrt{d}$ as $\|x_i\| \le 2\sqrt{d}$ by definition of general position. 
By Lemma \ref{lem:dbi_mostlyfaraway}, $\pitarget(A^C) \le e^{-\Omega(d)}$. Thus as $\alpha=\alpha_{\ref{lem:OU_hitting_time_general}}$ is a universal constant, for $d \ge d_0(p)$,
\begin{align*}
&\E_{x \sim \pitarget}\big[ \| \hat s(x) - \grad \log \pitarget(x) \|^p \big]^{1/p} \le \Big((3 \alpha)^p d^{p/2} e^{-\Omega(d)}\Big)^{1/p} \le e^{-\Omega(d)}.
\end{align*}

\textbf{Lipschitz $\hat s$ if $x_1, \ldots, x_n$ are in general position:} Again let $A := \big\{ x:\|x-x_i\| \ge 0.16\sqrt{d}\text{ for all }1 \le i \le n\big\}$. It is easy to check $\hat s(x)$ is Lipschitz whenever $x \in A$ or $\|x - x_i\| \le 0.15\sqrt{d}$ for some $i$ (which must be unique if $x_1, \ldots, x_n$ are in general position). In Lemma \ref{lem:mollifier_def_properties}, we construct $\psi(\cdot) \in [0,1]$ to be $H$-Lipschitz for a universal constant $H>0$, and $\psi(\cdot)$ is infinitely differentiable.
For $x$ such that $0.15\sqrt{d} \le \|x - x_i\| \le 0.16\sqrt{d}$ for some $i$ (which must be unique if $x_1, \ldots, x_n$ are in general position), we can compute
\begin{align*}
\grad \hat s(x) &= -\alpha\psi\Big(\frac{100(x-x_i)-11}{\sqrt{d}}\Big)-\frac{100\alpha x}{\sqrt{d}} \grad \psi\Big(\frac{100(x-x_i)-11}{\sqrt{d}}\Big) \\
&\qquad\qquad - \Big(1-\psi\Big(\frac{100(x-x_i)-11}{\sqrt{d}}\Big)\Big) + \frac{100x}{\sqrt{d}}\grad \psi\Big(\frac{100(x-x_i)-11}{\sqrt{d}}\Big).
\end{align*}
Thus as $\alpha=\alpha_{\ref{lem:OU_hitting_time_general}}$ is a universal constant, $\hat s$ is Lipschitz with a universal constant bound on its Lipschitz constant.

\textbf{$x_1, \ldots, x_n$ are in general position with high probability:} Since each $x_i \sim N(0, I_d)$, $x_i-x_j \sim N(0, 2I_d)$. By Lemma \ref{lem:gaussianconcentration}, we have $\|x_i-x_j\| \ge 0.4\sqrt{d}$ with probability at least $1-e^{-\Omega(d)}$. Another direct application of Lemma \ref{lem:gaussianconcentration} shows that each $x_i$ is such that $0.5\sqrt{d} \le \|x_i\| \le 2\sqrt{d}$ with probability at least $1-e^{-\Omega(d)}$. Thus as $n = \poly(d)$, a Union Bound gives that $x_1, \ldots, x_n$ are in general position with probability at least $1-\big(\binom{n}2+n\big) e^{-\Omega(d)} \ge 1-e^{-\Omega(d)}$ for $d \ge d_0$, where $d_0$ is a universal constant.

This concludes the proof of Theorem \ref{thm:dbi-tv}.

\section{Proofs for Subsection \ref{subsec:lower_bd_limit}}\label{sec:asymptoticproofs}
Here we prove Theorem \ref{thm:general-pi}. Throughout this proof, we let $1\{\cdot\}$ denote the indicator function, and define the convex cone $\mbox{Cone}_{\theta, u}$ as follows \citep{boyd2004convex}:
for a half–angle $\theta\in(0,\pi/2)$ and unit vector $u \in \mathbb{R}^d$, 
\begin{align}
\Cone_{\theta,u} &:= \Big\{y\in\mathbb{R}^d\setminus\{0\}:\ \big\langle \frac{y}{\|y\|},\, u\big\rangle \ge \cos\theta \Big\} \\
&= \{y:\ g(y)\ge 0\},\, g(y):=\langle y,u\rangle-\|y\|\cos\theta. \notag
\end{align}

\textbf{Constructing $\theta, u$:} First, for $k \in \mathbb{N}_+$, define $s_k(x) = \nabla \log \pitarget(x) \,1 \{\|x\| >  k\}$.
Note that $s_k(x) \to 0$ as $k \to \infty$ and $\|s_k(x)\|^2 \leq \|\nabla \log \pitarget(x)\|^2 \in L^2 (\pitarget)$. 
By Lebesgue's dominated convergence theorem, we have $\E_{\pitarget}[\|s_k(x)\|^2] \to 0$ as $k \to \infty$. 
Similarly, we have $\mathbb{P}_{\pitarget} (\{x: \|x\| > k\}) \to 0$ as $k \to \infty$. 
We then choose a sufficiently large $k_0$, such that 
\begin{align*}
    \mathbb{E}_{\pitarget}[\|s_{k_0}(x)\|^2] \leq \varepsilon^2_{\rm score} / 8, \qquad \mathbb{P}_{\pitarget} (\{x: \|x\| > k_0\})\leq \min\{ \varepsilon^2_{\rm score} / 10, \, \varepsilon_{\rm TV} / 2, \, 1 / 2\}. 
\end{align*}
Note that $\int 1\{\|x\| \leq k_0\} \|\nabla \log \pitarget (x)\|^2\, \pitarget(x) \dd x < \infty$ and $\int 1\{\|x\| \leq k_0\} \, \pitarget(x) \dd x < \infty$. By absolute continuity of the Lebesgue integral, there exists $\delta > 0$, such that for any set $A$ with Lebesgue measure no larger than $\delta$, we have $\int_A 1\{\|x\| \leq k_0\} \|\nabla \log \pitarget (x)\|^2\, \pitarget(x) \dd x \leq \varepsilon_{\rm score}^2 / 8$ and $\int_A 1\{\|x\| \leq k_0\} \, \pitarget(x) \dd x \leq \min\{\varepsilon_{\rm score}^2 / 8, \varepsilon_{\rm TV} / 2\}$.

We then choose $\theta \in (0, 2\pi)$ small enough, such that the Lebesgue measure of $\mathrm{Cone}_{\theta, u} \cap \{x: \|x\| \leq k_0\}$ is no larger than $\delta$ for any unit vector $u \in \mathbb{R}^d$. 
Since $\mathbb{P}_{\pitarget} (\{x: \|x\| \leq  k_0\}) \geq 1 / 2$, then there exists a unit vector $u \in \mathbb{R}^d$, such that $\mathbb{P}_{\pitarget}(\{x: \|x\| \leq  k_0\} \cap \mathrm{Cone}_{\theta, u}) > 0$. For the rest of this proof, we let $u$ refer to this unit vector.

\textbf{Constructing $\hat s$:}
Consider $u$ as defined just above.
Let $\mathrm{Cone}_{\theta, u}^{\circ}$ denote the open set that is $\mathrm{Cone}_{\theta, u}$ minus its boundary. 
Let $K \subseteq \mathrm{Cone}_{\theta, u}^{\circ}$ be closed ball of positive Lebesgue measure. 
We can clearly take a smaller closed ball $K' \subset K$ containing an open ball inside it, such that the containment within $K$ is strict.
Let $B = \bigcap_{x \in K'} (K-x)$, where $K-x$ refers to translation.
Note $B$ is nonempty, has positive Lebesgue measure, and contains an open ball.
Hence if $x \in K'$ and $y-x \in B$, we have $y \in K$.
We now define
\begin{align*}
\hat s(x) := \begin{cases}
u &: x \in K' \\
\grad \log \pitarget &: x \not\in K'.
\end{cases}
\end{align*}
Recall the definition of the SDE for $(X_t)_{t \ge0}$ in terms of $\hat s$: 
\begin{align*}
    \dd X_t = \hat s(X_t)\, \dd t + \sqrt{2}\, \dd B_t, \qquad X_0 \sim \pi_0.
\end{align*}
Note the SDE given above has a unique strong solution by Zvonkin's Theorem \citep{zvonkin1974transformation}. Also note that $(X_t)_{t \geq 0}$ is a Markov process associated with natural filtration $(\mathcal{F}_t)_{t \geq 0}$, and moreover satisfies the strong Markov property since the SDE defining $X_t$ has a unique strong solution. 

\textbf{Small $L^2$ error of $\hat s$:} With such $k_0, \theta, u$, since $K' \subseteq \mathrm{Cone}_{\theta, u}$ and as $u$ is a unit vector, we have
\begin{align*}
    &\mathbb{E}_{\pitarget}[\|\hat s(x) - \nabla \log \pitarget(x)\|^2]  \\
    &\qquad\le \int_{\mathrm{Cone}_{\theta, u}} \|u - \nabla \log \pitarget(x)\|^2 \pitarget(x) \dd x \\
    &\qquad \leq \int_{\mathrm{Cone}_{\theta, u} \cap \{x: \|x\| > k_0\}} \|u - \nabla \log \pitarget(x)\|^2 \pitarget(x) \dd x \\
    &\qquad \qquad \quad + \int_{\mathrm{Cone}_{\theta, u} \cap \{x: \|x\| \le k_0\}} \|u - \nabla \log \pitarget(x)\|^2 \pitarget(x) \dd x \\
    &\qquad \leq 2 \int_{\mathrm{Cone}_{\theta, u} \cap \{x: \|x\| > k_0\}} (1 + \|\nabla \log \pitarget(x)\|^2) \pitarget(x) \dd x \\
    &\qquad \qquad \quad + 2 \int_{\mathrm{Cone}_{\theta, u} \cap \{x: \|x\| \le k_0\}} (1 + \|\nabla \log \pitarget(x)\|^2) \pitarget(x) \dd x \\
    &\qquad = 2\mathbb{P}_{\pitarget}(\{x: \|x\| > k_0\}) + 2 \mathbb{P}_{\pitarget}( \{x: \|x\| \leq k_0\} \cap \mathrm{Cone}_{\theta, u}) + 2 \int_{\|x\| > k_0} \|\nabla \log \pitarget(x)\|^2 \pitarget(x) \dd x  \\
    & \qquad \qquad \quad + 2 \int_{\mathrm{Cone}_{\theta, u} \cap \{x: \|x\| \leq k_0\}} \|\nabla \log \pitarget(x)\|^2 \pitarget(x) \dd x \\
    &\qquad \leq  \varepsilon^2_{\rm score}. 
\end{align*}

\textbf{Far from $\pitarget$ in Total Variation as $t \rightarrow \infty$:} We will prove this by showing that $\mathbb{P}_{\pitarget}(\mathrm{Cone}_{\theta, u}) \le \varepsilon_{\rm TV}$ and $\mathbb{P}_{\pi_t}(\mathrm{Cone}_{\theta, u}) \to 1$ as $t \to \infty$. Together, these facts directly imply that $\liminf_{t \to \infty}\mathrm{TV}(\pitarget, \pi_t) \geq 1 - \varepsilon_{\mathrm{TV}}$.

\textbf{Small $\mathbb{P}_{\pitarget}(\mathrm{Cone}_{\theta, u})$:} Since the Lebesgue measure of $\mathrm{Cone}_{\theta, u} \cap \{x: \|x\| \leq k_0\}$ is no larger than $\delta$, we have 
\begin{align*}
    \mathbb{P}_{\pitarget}(\mathrm{Cone}_{\theta, u}) &\leq \mathbb{P}_{\pitarget}(\{x: \|x\| > k_0\}) + \mathbb{P}_{\pitarget}(\mathrm{Cone}_{\theta, u} \cap \{x: \|x\| \leq k_0\}) \\
    &\leq \min\{ \varepsilon^2_{\rm score} / 10, \, \varepsilon_{\rm TV} / 2, \, 1 / 2\} + \min\{\varepsilon_{\rm score}^2 / 8, \varepsilon_{\rm TV} / 2\} \\
    &\leq \varepsilon_{\rm TV}. 
\end{align*}
\textbf{Showing $\mathbb{P}_{t}(\mathrm{Cone}_{\theta, u}) \rightarrow 1$ as $t \rightarrow \infty$:} 
Our proof builds on existing analyses of the exit time of Brownian motion with drift from a cone \citep{garbit2014exit}. 
\begin{lemma}[\citet{garbit2014exit}]
\label{lemma:BM}
    Consider Brownian motion with a drift: 
    \begin{align*}
        \dd X_t = u\, \dd t + \sqrt{2}\, \dd B_t, \qquad X_0 = x \in \mathrm{Cone}_{\theta, u}^{\circ}. 
    \end{align*}
    where $(B_t)_{t \geq 0}$ is a $d$-dimensional standard Brownian motion. 
    Define $T = \inf_{t \geq 0} \{X_t \notin \mathrm{Cone}_{\theta, u}\} $. 
    Then $\mathbb{P}(T = \infty) = p_{\mathrm{no\, exit}}(x) > 0$.
    Here, $p_{\mathrm{no\, exit}}: \mathrm{Cone}_{\theta, u}^{\circ} \mapsto \mathbb{R}_{> 0}$ is continuous. 
\end{lemma}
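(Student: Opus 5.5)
We would first reduce to a canonical frame. The cone $\Cone_{\theta,u}$ and the drift $u$ are both defined in terms of $u$, and Brownian motion is rotation invariant. So we may assume $u=e_1$ and write $X^x_t = x + t u + \sqrt2 B_t$ for every starting point $x\in\Cone_{\theta,u}^{\circ}$, all driven by the same Brownian motion. For a point $y$ in the cone, $\mathrm{dist}(y,\partial \Cone_{\theta,u}) = \|y\|\sin(\theta-\phi(y))$, where $\phi(y)$ is the angle between $y$ and $u$. In particular, for points $R u$ on the axis the distance to the boundary is $R\sin\theta$.

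\textbf{Step 1: positivity from deep inside the cone.} Start from $x=Ru$. Along the drift path, $\mathrm{dist}((R+t)u,\partial\Cone_{\theta,u}) = (R+t)\sin\theta$. Hence exiting requires
\begin{align*}
\sup_{t\ge0}\big(\sqrt2\|B_t\| - t\sin\theta\big) \ge R\sin\theta .
\end{align*}
The random variable $S:=\sup_{t\ge0}(\sqrt2\|B_t\|-t\sin\theta)$ is a.s. finite, because $\|B_t\|/t\to0$ a.s. by the law of large numbers (or the law of the iterated logarithm). Therefore $\PP(S\ge R\sin\theta)\to0$ as $R\to\infty$, and we fix $R$ so that $p_{\mathrm{no\,exit}}(Ru)\ge 1/2$.

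\textbf{Step 2: positivity from any interior point.} Take $x\in\Cone^{\circ}_{\theta,u}$ and join it to $Ru$ by a segment $\gamma$, which lies in the interior by convexity. Choose $\rho>0$ so that the $\rho$-tube around $\gamma$ stays inside the cone and the ball of radius $\rho$ around $Ru$ has $p_{\mathrm{no\,exit}}\ge 1/4$ there (same argument as Step 1 with $R$ slightly larger). By the support theorem for Brownian motion (or Girsanov plus the positivity of the probability that a Brownian motion stays in a small tube around a smooth path), with positive probability $X^x$ stays in the tube on $[0,1]$ and ends in that ball. The strong Markov property at time $1$ then gives $p_{\mathrm{no\,exit}}(x) > 0$.

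\textbf{Step 3: continuity.} Let $\tau_D$ denote the exit time from a small closed ball $D\subset\Cone^{\circ}_{\theta,u}$ around $x$. The event $\{T=\infty\}$ is shift invariant after $\tau_D$, so the strong Markov property gives
\begin{align*}
p_{\mathrm{no\,exit}}(x)=\E_x\big[p_{\mathrm{no\,exit}}(X_{\tau_D})\big].
\end{align*}
Thus $p_{\mathrm{no\,exit}}$ is bounded and harmonic for the generator $\Delta + u\cdot\nabla$ on $\Cone^{\circ}_{\theta,u}$ (up to the factor-$2$ normalization of the diffusion coefficient). It is the Poisson-type integral of bounded data against the exit distribution from balls, which has a smooth density in the starting point. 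Equivalently, a bounded function with this mean-value property is a weak solution of a uniformly elliptic equation with constant coefficients, hence smooth by elliptic regularity. In particular it is continuous.

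As a backup for continuity, we would use the coupling: $1\{T^{x_n}=\infty\}\to 1\{T^x=\infty\}$ a.s. This holds because the cone satisfies an exterior cone condition, so by Blumenthal's 0-1 law a path touching the boundary exits immediately a.s. Dominated convergence then finishes.

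The main obstacle is Step 2: making rigorous the passage from an arbitrary interior point to the deep region while staying inside the cone. We would handle it via the tube/support-theorem estimate combined with the strong Markov property.
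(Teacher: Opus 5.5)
Your proof is correct. It takes a genuinely different route from the paper, which does not prove this lemma at all but imports it from \citet{garbit2014exit}, a detailed study of the exit time of drifted Brownian motion from cones.

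You argue from first principles instead, in three steps.
\begin{enumerate}
\item The a.s.\ finiteness of $S=\sup_{t\ge0}(\sqrt2\|B_t\|-t\sin\theta)$, which follows from $\|B_t\|/t\to0$, gives no-exit probability close to $1$ uniformly on a ball around $Ru$ for $R$ large.
\item A tube/support estimate along the segment from $x$ to $Ru$, combined with the Markov property at time $1$, transports positivity to every interior point.
\item The strong Markov property at the exit time of a small ball makes $p_{\mathrm{no\,exit}}$ a bounded $L$-harmonic function with $L=\Delta+u\cdot\nabla$. This is exactly the generator of $\dd X_t=u\,\dd t+\sqrt2\,\dd B_t$, so no renormalization is needed. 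Continuity then follows because the harmonic measure of a ball has a density depending continuously on the starting point.
\end{enumerate}

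What your route buys is a self-contained proof of precisely what the paper uses later: positivity plus continuity, so that $\inf_{x\in K}p_{\mathrm{no\,exit}}(x)>0$ on compact $K\subset\mathrm{Cone}^{\circ}_{\theta,u}$. In fact, the translation coupling $X^y_t=X^x_t+(y-x)$ applied to your Step 2 already gives a locally uniform lower bound, if you shrink the tube and target ball by a factor of two. That alone would suffice for the paper's application, without Step 3. The citation supplies much finer information (tail asymptotics of $T$) that is not needed here.

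One small point concerns your backup coupling argument for continuity. Blumenthal's $0$-$1$ law rules out touching $\partial\mathrm{Cone}_{\theta,u}$ without exiting. On $\{T^x<\infty\}$, the exterior cone condition then gives immediate entry into the open exterior, as you intend. On $\{T^x=\infty\}$, however, concluding $T^{x_n}=\infty$ eventually also requires $\inf_{t\ge0}\mathrm{dist}(X^x_t,\mathrm{Cone}_{\theta,u}^C)>0$. In other words, the path must not creep toward the boundary as $t\to\infty$. This follows from $X^x_t/t\to u$ a.s., which you already used in Step 1. Your primary harmonic-function argument for continuity does not depend on this.
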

Now, for any $x \in \R^d$, define the auxiliary process $(Y_t(x))_{t \ge 0}$ by the following SDE that is a Brownian motion with drift:
\begin{align}\label{eq:limit_auxiliary_process}
\DERIV Y_t(x) = u\, \DERIV t + \sqrt{2}\, \DERIV B_t,\qquad Y_0 = x.
\end{align}
We define
\begin{align*}
\alpha_{K'} := \inf_{x \in K'} \E\big[ 1\{ Y_1(x)-x \in B \} 1\{ (Y_t)_{t \in [0,1]} \in K'\} \big].
\end{align*}
We claim $\alpha_{K'}>0$. Letting $B'$ be an open ball strictly contained in $B$ and letting $K''$ be an open ball strictly contained in $K'$, we have $\alpha_{K'} \ge \inf_{x \in K'} \E\big[ 1\{ Y_1(x)-x \in B' \} 1\{ (Y_t)_{t \in [0,1]} \in K''\} \big]$.
Next by classical results on Brownian motion with drift, the map $x \rightarrow \E\big[ 1\{ Y_1(x)-x \in B' \} 1\{ (Y_t)_{t \in [0,1]} \in K''\} \big]$ is continuous in $x$ and strictly positive as $B'$ and $K''$ are open. In particular as $K'$ is compact, we obtain strict positivity of this map over $K'$ by the Stroock-Varadhan Support Theorem \citep{Stroock1972OnTS}, and we obtain continuity as a Brownian motion with drift satisfies the Strong Feller property (see e.g. 24.1, \citet{Rogers_Williams_2000}). Thus $\alpha_{K'}>0$.

We inductively define the following stopping times: 
\begin{align*}
    & T_1 = \inf\{t \geq 0: \, X_t \in K'\}, \qquad \tau_1 = \inf\{t \geq T_1 + 1: X_t \notin \mathrm{Cone}_{\theta, u} \}, \\
    & T_k = \inf\{t \geq \tau_{k - 1}: \, X_t \in K'\}, \qquad \tau_k = \inf\{t \geq T_{k} + 1: X_t \notin \mathrm{Cone}_{\theta, u} \}, \qquad \mbox{for }k \ge 2. 
\end{align*}
We then show that $\mathbb{P}(T_1 < \infty) = 1$. 
To prove this result, we introduce an auxiliary process $(Y_t)_{t \geq 0}$ defined by the SDE below: 
\begin{align*}
    \dd Y_t = \nabla \log \pitarget(Y_t) \dd t + \sqrt{2} \dd B_t, \qquad Y_0 = X_0. 
\end{align*}
Define $T_1^{Y} = \inf\{t \geq 0: Y_t \in K' \}$. 
Since the processes $(X_t)_{t \geq 0}$ and $(Y_t)_{t \geq 0}$ coincide up to hitting $K'$, it follows that $T_1 = T_1^Y$, and $\mathbb{P}(T_1 < \infty) = \mathbb{P}(T_1^Y < \infty)$. 
Recall $Y_t \overset{d}{\to} \pitarget$. Combined with Assumption \ref{assumption:general-pi}, the associated diffusion is non-explosive, irreducible, and positive Harris recurrent with invariant distribution $\pitarget$. Since $\pitarget(K')>0$as $K'$ has positive Lebesgue measure, it follows that $\mathbb{P}(T_1^Y < \infty) = 1$. Hence $\mathbb{P}(T_1 < \infty) = 1$.

We now look at the probability $\mathbb{P}(\tau_1 = \infty | T_1 < \infty)$. 
Note that because $X_{T_1} \in K'$, if $X_{T_1+1}-X_{T_1} \in B$, we have $X_{T_1+1} \in K$.
Applying Lemma \ref{lemma:BM} and the compactness of $K$, we have $\inf_{x \in K} p_{\mathrm{no\, exit}}(x)>0$. Thus by the strong Markov property,
\begin{align*}
\PP(\tau_1=\infty) &\ge \E\big[ 1\{ X_{T_1+1} \in K\} p_{\mathrm{no\, exit}}(X_{T_1 + 1}) \big] \\
&\ge \E\big[ 1\{ X_{T_1+1}-X_{T_1} \in B \} 1\{ (X_t)_{t \in [T_1, T_1+1]} \in K'\} p_{\mathrm{no\, exit}}(X_{T_1 + 1}) \big] \\
&\ge \E\big[ 1\{ X_{T_1+1}-X_{T_1} \in B \} 1\{ (X_t)_{t \in [T_1, T_1+1]} \in K'\} \inf_{x \in K} p_{\mathrm{no\, exit}}(x)\big] \\
&= \inf_{x \in K} p_{\mathrm{no\, exit}}(x) \cdot \E\big[ 1\{ X_{T_1+1}-X_{T_1} \in B \} 1\{ (X_t)_{t \in [T_1, T_1+1]} \in K'\} \big]. 
\end{align*}
Note that within $K'$, we have 
\begin{align*}
    \dd X_t = u \dd t + \sqrt{2} \dd B_t.  
\end{align*}
Consequently since $X_{T_1} \in K'$, we have by the strong Markov property that
\begin{align*}
&\E\big[ 1\{ X_{T_1+1}-X_{T_1} \in B \} 1\{ (X_t)_{t \in [T_1, T_1+1]} \in K'\} \big] \\
&\qquad = \E\Big[ {\E}_{X_{T_1}}\big[ 1\{ X_{T_1+1}-X_{T_1} \in B \} 1\{ (X_t)_{t \in [T_1, T_1+1]} \in K'\} \big] \Big] \\
&\qquad \ge \inf_{x \in K'} \E\big[ 1\{ Y_1(x)-x \in B \} 1\{ (Y_t)_{t \in [0,1]} \in K'\} \big] \\
&\qquad = \alpha_{K'}>0,
\end{align*}
where $\E_{X_{T_1}}[\cdot]$ denotes expectation w.r.t. $Y_t(X_{T_1})$, the process above started at $X_{T_1}$, where $Y_t(x)$ was defined in (\ref{eq:limit_auxiliary_process}). Here we use that $\alpha_{K'}>0$ as justified earlier. Combining the steps above gives
\begin{align*}
\PP(\tau_1=\infty) \ge \alpha_{K'} \inf_{x \in K} p_{\mathrm{no\, exit}}(x) > 0.
\end{align*}
Similarly, for every $k \in \mathbb{N}_+$ with $k \geq 2$, we can prove that 
\begin{align*}
    \mathbb{P}(T_k < \infty \mid \tau_{k - 1} < \infty) = 1, \qquad \mathbb{P}(\tau_k = \infty \mid T_k < \infty) \geq \alpha_{K'} \inf_{x \in K} \, p_{\mathrm{no\, exit}}(x)>0. 
\end{align*}
Combining the above, we obtain $\mathbb{P}(\tau_k = \infty) \geq 1 - (1 - \alpha_{K'} \inf_{x \in K} \, p_{\mathrm{no\, exit}}(x))^k \to 1$ as $k \to \infty$. 
Note that for any $k \in \mathbb{N}_+$ and any $t \ge 0$, we have
\begin{align*}
    \mathbb{P}_{\pi_t}(\mathrm{Cone}_{\theta, u}) \geq \mathbb{P}(\tau_k = \infty, \, t \geq T_k + 1 ). 
\end{align*}
Therefore, $\liminf_{t \to \infty} \mathbb{P}_{\pi_t}(\mathrm{Cone}_{\theta, u}) \geq \liminf_{t \to \infty} \mathbb{P}(\tau_k = \infty, \, t \geq T_k + 1 ) = \mathbb{P}(\tau_k = \infty)$, where the last equality follows by Monotone Convergence Theorem. This holds for any $k \in \mathbb{N}_+$, so combining the above, we have $\lim_{t \to \infty}\mathbb{P}_{\pi_t}(\mathrm{Cone}_{\theta, u}) = 1$.
The proof is done. 

\section{Simulation details}\label{sec:appendix_simulations}
\textbf{Learning the score:} 
We follow the DDPM setup for learning an estimate of the score function (see Algorithm 1 of \citet{ho2020denoising}). 
In particular, we consider a linear noise schedule at 1000 noise levels, where for each $1 \le t \le 1000$, $\beta_t = \frac{1001-t}{1000} \cdot 10^{-4} + \frac{t-1}{1000} \cdot 10^{-2}$. 
We train a NN $\hat s_{\theta}(x, t)$ that takes as input $x\in \R^d$ and $t \in \{1, \ldots, 1000\}$ to learn the added noise (the manner in which noise is added will be described next). The NN has three hidden layers, where the hidden layers are of dimension 256 for Gaussian $\pitarget$ and are of dimension 512 for GMM $\pitarget$.

We create a training set of 10000 samples by drawing 1000 i.i.d. samples from $\pitarget$ and duplicating each sample 10 times. 
The NN is trained with the Adam \citep{kingma2015adam} optimizer run for 150,000 epochs with learning rate $10^{-3}$. 
The batch size is 10,000, the entire dataset. 
In training we perform the following procedure. 
We draw $t$ uniformly from $\{1, \ldots, 200\}$. 
Let $\bar \alpha_t = \prod_{t'=1}^t (1-\beta_{t'})$. 
Then for all $x_i$ in the batch (which here is the whole dataset), we let $x_i' = \sqrt{\bar \alpha_t} x_i + \sqrt{1 - \bar \alpha_t} \eps_i$ where $\eps_i \sim N(0, I_d)$ are drawn i.i.d. for each $x_i$.
We aim to minimize the mean square loss between $\hat s_{\theta}(x_i', t)$ and $\eps_i$ over the batch (again, which here is the whole dataset). 
As we mentioned earlier, these design choices were made to encourage a $\hat s$ that overfits to the 1000 i.i.d. samples from $\pitarget$. 
We focus on $t$ drawn uniformly from $\{1, \ldots, 200\}$ -- which are lower noise levels -- because we aim to learn an approximation to $\grad \log \pitarget$, which corresponds to no noise added to the training data. 

\textbf{Sampling algorithms:} We run Langevin dynamics by implementing ULA as written in (\ref{eq:ULA}) with fixed step-size 0.0025 for 1000 iterations. As mentioned earlier, we use 1000 iterations to simulate a $\poly(d)$ timescale (recall here $d=25$ or $50$), and as score-based generative models rarely use more than 1000 denoiser evaluations in practice with fewer denoiser evaluations being considered desirable \citep{karras2022elucidating}. To use our trained NN $\hat s_{\theta}(x, t)$ to approximate the score function $\grad \log \pitarget$, we approximate $\grad \log \pitarget(x)$ by $-\frac{\hat{s}_{\theta}(x, 10)}{\sqrt{1 - \bar \alpha_{10}}}$. 
We set $t=10$ here, a low noise level, as $\grad \log \pitarget$ corresponds to no noise added. We did not set $t$ as low as possible and rather chose the noise level $t=10$ because it was better represented in training, leading to a $\hat s_{\theta}(x, t)$ being a more faithful score estimate for such $t$.

\textbf{Compute:} Our experiments were all run on a single RTX A6000 GPU.

\textbf{Additional results:} We also plot the empirical Wasserstein distance between the produced samples and $\pitarget$ for the case when $\pitarget$ is a Gaussian.
The plot is given in Figure \ref{fig:gaussian_w2}. 
As with the plot for empirical Wasserstein distance between the produced samples and $\pitarget$ for Gaussian $\pitarget$, the Wasserstein distance is computed with the Sinkhorn loss \citep{cuturi2013sinkhornloss}.
In particular, for both of these computations, we draw 5000 samples from $\pitarget$. We then run 10 trials, and for each trial we subsample 500 points from the produced sample and the 5000 sampled points from $\pitarget$, computing the Sinkhorn loss between these two sets of 500 points.
The Sinkhorn loss is applied with $p=2$ and blur scaling 0.01.
Note we did not plot KL for GMM $\pitarget$ since estimating KL from samples is unstable and expensive in high dimensions, and fitting the produced samples to a Gaussian does not make sense when $\pitarget$ is not a Gaussian.

\begin{figure}[h!]
\centering
\includegraphics[width=0.6\textwidth]{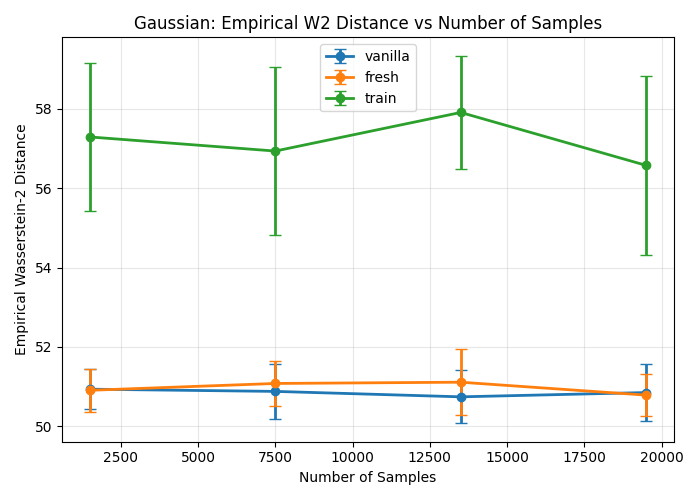}
\caption{Empirical Wasserstein distance of produced samples vs Gaussian $\pitarget$.}
\label{fig:gaussian_w2}
\end{figure}

Figure \ref{fig:gaussian_w2} shows a similar result to Figure \ref{fig:plot_gaussian}. For Gaussian $\pitarget$, Algorithm 3 (initialize at some of the training samples) performs significantly worse than Algorithm 2 (initialize at fresh samples) and Algorithm 1 (standard normal initialization).

\end{document}